\documentclass{article}
\usepackage{iclr2027_conference,times}

\usepackage{amsmath,amsfonts,bm}

\def\eqref#1{equation~\ref{#1}}

\def\1{\bm{1}}

\DeclareMathAlphabet{\mathsfit}{\encodingdefault}{\sfdefault}{m}{sl}
\SetMathAlphabet{\mathsfit}{bold}{\encodingdefault}{\sfdefault}{bx}{n}

\usepackage{graphicx}
\usepackage{caption}
\usepackage{subcaption}
\usepackage{algorithm}
\usepackage{algorithmic}
\usepackage{hyperref}
\usepackage{fontawesome5}
\usepackage{tikz}
\usepackage{bbm}
\usepackage{url}
\usepackage{booktabs}
\usepackage{amsthm}
\usepackage{xcolor}
\hypersetup{
    colorlinks=true,
    citecolor=olive,
    linkcolor=teal,
    urlcolor=magenta
}
\usepackage[capitalize,nameinlink,noabbrev]{cleveref}
\definecolor{eqhighlight}{RGB}{242,246,249}
\definecolor{leapblue}{RGB}{42,92,140}
\definecolor{leaplight}{RGB}{239,245,250}
\definecolor{leapgray}{RGB}{246,247,248}
\definecolor{leaptext}{RGB}{55,75,95}

\crefname{equation}{eq.}{eqs.}
\Crefname{equation}{Eq.}{Eqs.}
\crefname{figure}{fig.}{figs.}
\Crefname{figure}{Fig.}{Figs.}
\crefname{section}{\S\!}{\S\!}
\Crefname{section}{\S\!}{\S\!}
\crefname{theorem}{thm.}{thms.}
\Crefname{theorem}{Thm.}{Thms.}
\crefname{proposition}{prop.}{props.}
\Crefname{proposition}{Prop.}{Props.}
\crefname{lemma}{lem.}{lems.}
\Crefname{lemma}{Lem.}{Lems.}
\crefname{corollary}{cor.}{cors.}
\Crefname{corollary}{Cor.}{Cors.}
\crefname{remark}{rem.}{rems.}
\Crefname{remark}{Rem.}{Rems.}
\crefname{definition}{def.}{defs.}
\Crefname{definition}{Def.}{Defs.}
\crefname{algorithm}{alg.}{algs.}
\Crefname{algorithm}{Alg.}{Algs.}
\crefname{assumption}{assump.}{assumps.}
\Crefname{assumption}{Assump.}{Assumps.}
\usetikzlibrary{arrows.meta,positioning,calc,decorations.pathreplacing}
\theoremstyle{plain}
\newtheorem{theorem}{Theorem}[section]
\newtheorem{proposition}[theorem]{Proposition}

\theoremstyle{definition}

\theoremstyle{remark}
\newtheorem{remark}[theorem]{Remark}

\title{Action Learning for Scientific Simulations}

\author{%
\textbf{Sarah Liaw}$^{1,2}$\thanks{Correspondence to \texttt{sliaw@g.harvard.edu}.}
\quad
\textbf{Yuanqi Du}$^{3}$
\quad
\textbf{Yilun Du}$^{1,2}$
\\
{\normalfont $^{1}$Kempner Institute
\quad
$^{2}$Harvard University
\quad
$^{3}$Microsoft Research New England}
\\[6pt]
\href{https://sarahliaw.github.io/action-learning/}{\textcolor{leapblue}{\faGlobeAmericas\ \textbf{Website}}}
\qquad
\href{https://github.com/SarahLiaw/action-learning}{\textcolor{leapblue}{\faGithub\ \textbf{Code}}}
}

\iclrfinalcopy
\begin{document}

\maketitle
\lhead{Preprint}

\begin{abstract}
Understanding and controlling how systems evolve is central to science and engineering, where many problems require reasoning about complete trajectories. Existing approaches typically learn local dynamics or generate trajectories, but objectives constructed from individual transitions can miss dependencies across time, while generative models do not necessarily provide a tractable objective for evaluating and optimizing arbitrary paths. We study \emph{action learning}, the problem of recovering a stochastic action from trajectory observations, yielding a common objective for reasoning about dynamical systems. We introduce \textbf{LEAP} (\textbf{L}earned \textbf{E}nergy on \textbf{A}ction \textbf{P}aths), which anchors the unknown action to a tractable reference process and learns the remaining correction by comparing observed trajectories with reference trajectories. This construction yields a differentiable objective on path space that can capture both local dynamical structure and nonadditive dependencies across time. Once learned, the same action can be flexibly reused for path ranking and inference, mechanism steering, constrained optimization and sampling. Across stochastic physical systems, many-body orbital dynamics and molecular simulations, LEAP recovers underlying dynamics, captures dependencies across time and enables transfer across inference objectives and physical conditions. We show that \emph{action learning} is a promising framework for turning trajectory observations into reusable objectives for scientific inference and control.\looseness=-1 \end{abstract}

\section{From trajectory samples to path inference}
\label{sec:introduction}
\begin{figure}[htbp]
    \centering
    \captionsetup{font=small}

    \begin{subfigure}{0.48\linewidth}
        \centering
        \includegraphics[width=\linewidth]{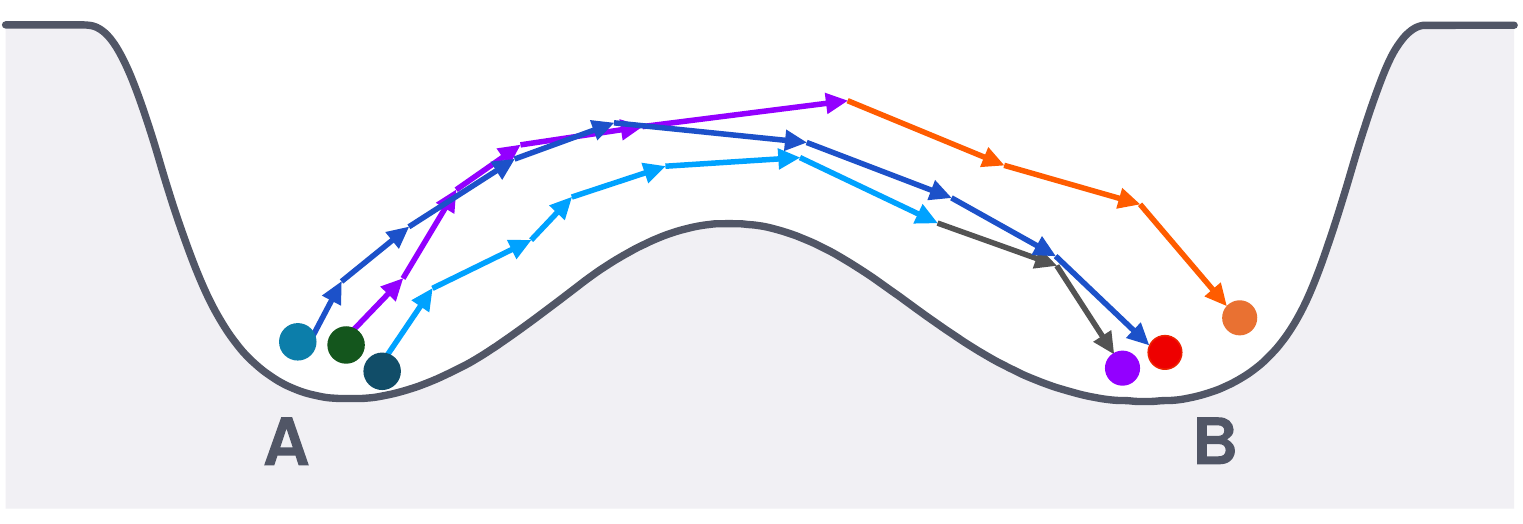}
        \caption{\textbf{Local dynamics learning.} One-step transition laws induce a likelihood over complete trajectories.}
        \label{fig:local}
    \end{subfigure}
    \hfill
    \begin{subfigure}{0.48\linewidth}
        \centering
        \includegraphics[width=\linewidth]{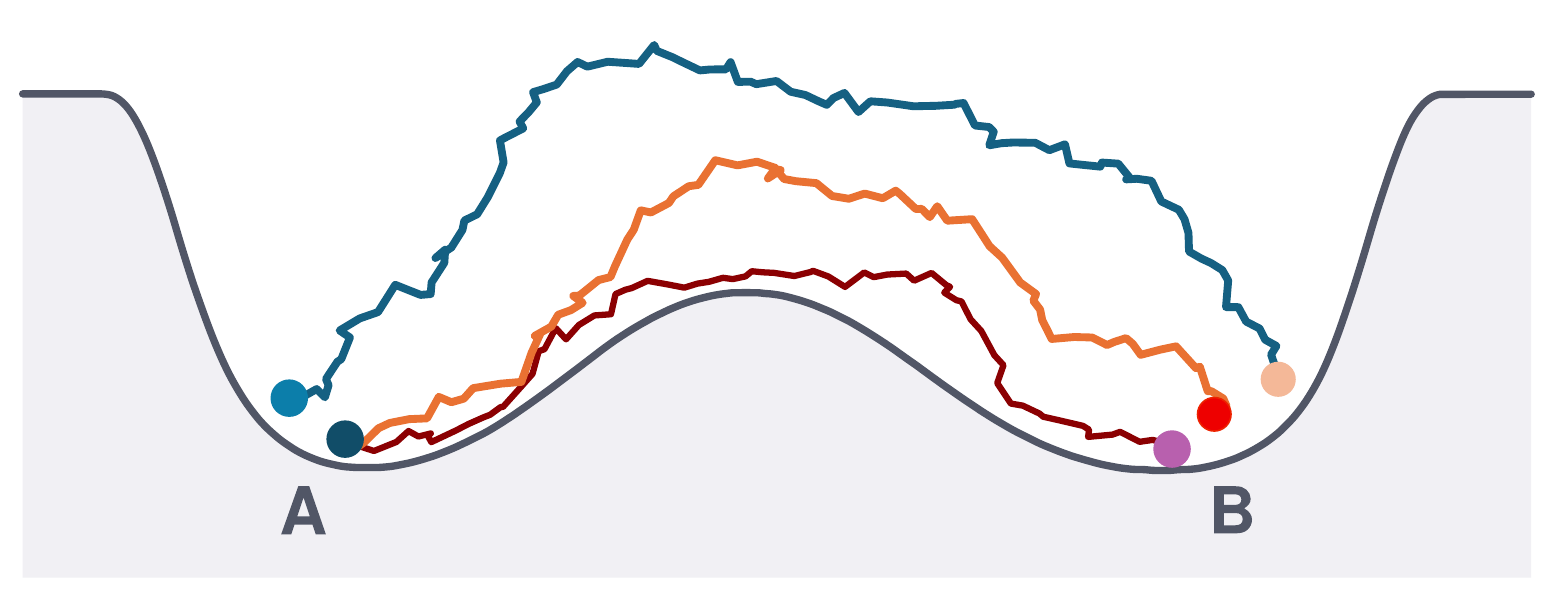}
        \caption{\textbf{Action learning.} LEAP learns relative path costs from complete trajectory comparisons.}
        \label{fig:global}
    \end{subfigure}

    \caption{\textbf{Local dynamics learning versus action learning.}
    Local models learn short-range transition laws that induce a path likelihood, whereas LEAP learns an action from complete trajectories while retaining structured local dynamics when appropriate.\\}
    \label{fig:teaser}
    \vspace{-18pt}
\end{figure}

Understanding how systems move between states requires reasoning about the trajectories that connect them. In molecular dynamics, two conformations may be connected by several transition routes that visit different intermediate configurations~\citep{bolhuis2002transition,shaw2010atom}; in robotics, agents act through sequences of decisions~\citep{kalakrishnan2011stomp,janner2022planningdiffusionflexiblebehavior}. In these settings, the object of interest is often the trajectory itself: which paths realize a transition, which mechanisms are plausible and which paths remain likely under perturbations or constraints.\looseness=-1

Answering these questions requires reasoning about complete paths. To infer representative transitions or impose new preferences, we need an objective that assigns probabilities or costs to trajectories. When the dynamics are known, stochastic action functionals provide such an objective. In statistical physics and stochastic control, actions such as the Onsager--Machlup functional, Freidlin--Wentzell rate functional and path integral control costs define probabilities or costs on path space~\citep{onsager1953fluctuations,FreidlinWentzell2012RandomPerturbations,Kappen_2005,Adib_2007}. Once the action is available, we can use it for downstream tasks like constrained inference and sampling.\looseness=-1 

In many applications, however, we do not know the action. Instead, we observe trajectories from simulations or experiments, such as molecular dynamics, sequences of state transitions, policy rollouts or logs of stochastic processes~\citep{LindorffLarsen2011HowFP,shaw2010atom,osti_10226477}. Classical trajectory inference methods typically assume a known action, while learned methods predict local dynamics, learn representations or generate trajectories directly~\citep{chen2019neuralordinarydifferentialequations,li2020scalable,song2021scorebasedgenerativemodelingstochastic,NEURIPS2024_478b06f6}. These objects are useful, but they do not directly give the reusable path objective needed for downstream trajectory inference. We therefore call the task of recovering an evaluable action from trajectory samples \emph{action learning}.\looseness=-1

Existing learned methods often model dynamics locally by fitting one-step transition laws and using the resulting local model to evaluate complete trajectories~\citep{3327345.3327385}. This requirement can become restrictive when trajectories are selected by global events, e.g., successful transitions or task-level constraints. A local predictor trained on such data estimates the one-step conditionals of the selected process, which can differ from the underlying dynamics. Errors in local transition log-likelihoods can also accumulate along a path and then distort the induced optimization landscape. Since path-level quantities do not have to decompose into independent local decisions, optimizing locally plausible segments can still produce a poor local minimum; \Cref{fig:teaser} shows this distinction.\looseness=-1

We propose \textbf{LEAP} (\textbf{L}earned \textbf{E}nergy on \textbf{A}ction \textbf{P}aths), a framework for learning a stochastic action directly from trajectory samples. LEAP treats the action itself as the learning target, rather than a local predictor or trajectory generator. It anchors the unknown action to a tractable reference process and learns the remaining correction from complete trajectories, yielding an evaluable objective on path space. In structured settings, this correction recovers changes in the underlying dynamics; more generally, it can capture nonadditive interactions across transitions. Once learned, the action can be reused with preferences that were not specified during training. For example, a reward for visiting an intermediate basin can favor a different transition route, while a geometric penalty can discourage paths outside admissible regions, in both cases without retraining the learned action.

Our approach applies broadly to trajectories from dynamical systems, including molecular simulations, stochastic physical systems and logged sequential processes~\citep{shaw2010atom, kalakrishnan2011stomp, janner2022planningdiffusionflexiblebehavior}. We evaluate LEAP across controlled physical systems, a $31$-body orbital benchmark with $186$-dimensional states and an alanine dipeptide molecular benchmarks. These experiments test physical recovery, nonadditive dependence across time and reuse across inference objectives, with the double well and orbital systems providing exact actions for evaluation. \looseness=-1

Our contributions are as follows: (i) we formulate \emph{action learning} as learning a reusable objective directly on complete trajectories; (ii) we introduce \textbf{LEAP}, which learns a stochastic action by comparing observed trajectories with samples from a tractable reference process; (iii) we establish theoretical guarantees for recovering the action and show that dependencies across time can result in trajectory objectives that cannot be represented by sums of local transition costs; and (iv) we demonstrate recovery of underlying physical dynamics from terminally selected trajectories, modeling of nonadditive dependencies in many-body systems, and reuse of a fixed learned action across molecular inference tasks and physical conditions.
\subsection{Related work}
\label{sec:related_work}

\textbf{Sampling trajectories from known dynamics.} Classical methods for sampling trajectories usually assume access to the dynamics, energy landscape or action. When trajectories of interest occur often, it is sufficient to just do direct simulation. When relevant paths are rare, e.g., because an energy barrier separates metastable states, methods can use known physical models to identify or sample transition pathways. The string method evolves discretized curves on an energy landscape~\citep{PhysRevB.66.052301}, while minimum action methods optimize action functionals derived from stochastic dynamics~\citep{FreidlinWentzell2012RandomPerturbations}. Stochastic control methods construct trajectory objectives from known dynamics and costs to sample rare transitions~\citep{Kappen_2005,Todorov2009EfficientComputationOptimalActions,WilliamsAldrichTheodorou2017MPPI,holdijk2023stochastic,du2024doob}. More recent methods use learned force fields or pretrained generative models to get energies or dynamics for transition inference~\citep{yuan2024analytical,raja2025actionminimizationmeetsgenerativemodeling,amin2025fastspecializedmachinelearning}. These methods typically rely on available physical models, learned dynamics or predefined objective to perform inference. LEAP targets the complementary setting of learning the trajectory action itself from observations, enabling subsequent inference without access to the underlying target dynamics at all.

\textbf{Learning from trajectories.}
Learned trajectory models usually target evolution or generation. Neural differential equation and latent dynamics models learn evolution from observations~\citep{chen2019neuralordinarydifferentialequations,kidger2021neural,pmlr-v97-hafner19a}, while score-based, flow matching, diffusion and action matching methods learn vector fields or scores that transport or generate samples from state, trajectory or marginal distributions~\citep{song2021scorebasedgenerativemodelingstochastic,NEURIPS2024_478b06f6,jing2024alphafoldmeetsflowmatching,neklyudov2023actionmatchinglearningstochastic}. These methods model evolution or generation, but they do not generally return the stochastic action needed for path optimization, mechanism steering or Metropolis-Hastings correction. Energy-based models have also been used for planning by
learning energies over individual state transitions and composing them along a trajectory~\citep{pmlr-v100-du20a}. Relatedly, reinforcement learning and imitation learning learn rewards or policies from demonstrations, e.g., maximum-entropy inverse reinforcement learning and adversarial imitation learning~\citep{ziebart2008maxentirl,ho2016generativeadversarialimitationlearning,fu2018learningrobustrewardsadversarial,nuti2023extractingrewardfunctionsdiffusion}. LEAP also learns from trajectories, but targets a different object: a stochastic action on complete paths, i.e., the trajectory-level objective used directly for optimization, control and sampling, which can then be reused across new downstream objectives. \looseness=-1

\section{Learning action functionals from trajectory samples}
\vspace{-3pt}
\label{sec:method}
LEAP learns a differentiable stochastic action $S_\theta(\Phi;c)$ directly from trajectory samples, providing a reusable objective for evaluating and optimizing trajectories. The action assigns a scalar objective to each complete trajectory, with lower action corresponding to higher probability under the observed path distribution. LEAP requires trajectory observations but not the underlying drift, potential or simulator likelihood. We construct the action by introducing a tractable reference process and learning a correction that describes how the observed path distribution differs from the reference. This formulation can represent both local dynamical structure and dependencies across time, yielding an objective that can be evaluated and reused for new inference tasks. See \Cref{alg:leap_train} for the algorithm. \looseness=-1

\begin{figure}[t]
    \centering
    \captionsetup{font=small}

    \includegraphics[width=\linewidth]{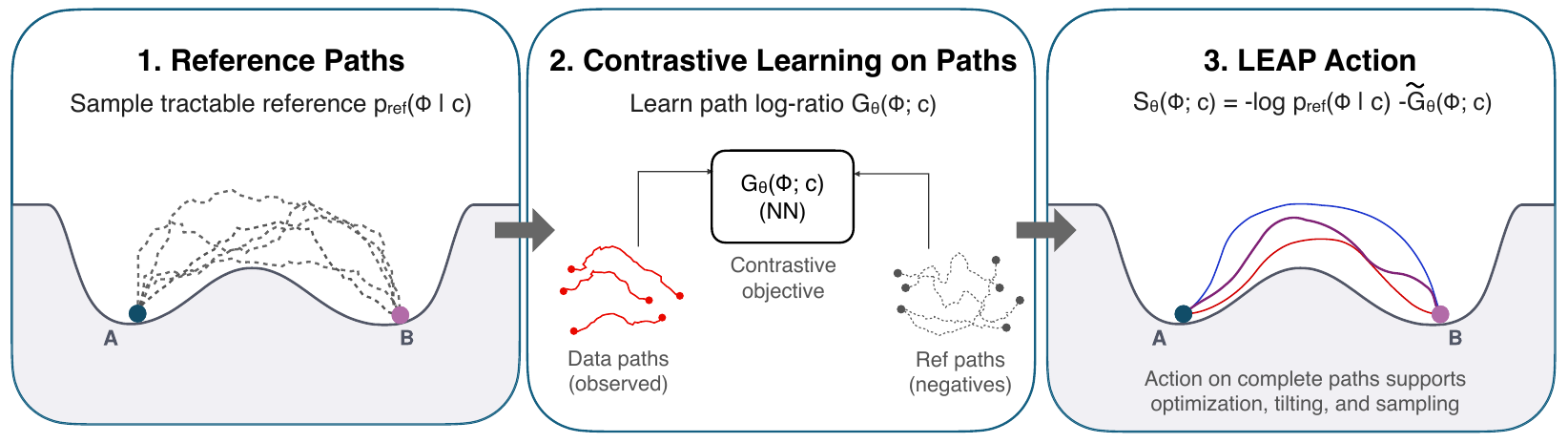}

    \caption{\textbf{LEAP.} Given data trajectories and a tractable reference process, LEAP learns a centered path-space correction and combines it with the reference action to obtain $S_\theta(\Phi;c)$.}
    \label{fig:method_overview}
    \vspace{-15pt}
\end{figure}

\subsection{Target action and training objective} \label{sec:method_setup}

Let $\Phi=(x_0,x_1,\dots,x_N)$ denote a discretized trajectory with $t_k=k\Delta t$ and horizon $T=N\Delta t$. Let $c$ denote the conditioning variables, e.g., endpoints, terminal regions, horizon or diffusion level. We observe trajectory samples $\Phi\sim p_{\mathrm{data}}(\cdot\mid c)$ for each context.

\textbf{Target action.} Following stochastic action formalisms such as Onsager--Machlup and Freidlin--Wentzell theory, we use the negative log path density as the trajectory objective~\citep{onsager1953fluctuations, FreidlinWentzell2012RandomPerturbations}: \begin{equation} \label{eq:target_action} S^*(\Phi;c) = -\log p_{\mathrm{data}}(\Phi\mid c) + \alpha(c), \end{equation} where $\alpha(c)$ is constant across paths with the same context and therefore does not affect their relative action, including for path ranking, sampling or representative path optimization (\Cref{sec:method_downstream}).

\textbf{Reference-based decomposition.} Directly evaluating $p_{\mathrm{data}}(\Phi\mid c)$ is generally not possible from trajectory samples. LEAP therefore introduces a tractable reference process with density $p_{\mathrm{ref}}(\Phi\mid c)$, providing an evaluable baseline action against which the observed dynamics can be characterized.  The unknown action can be expressed as the reference action plus a correction that captures how the observed path distribution differs from the reference. This correction admits a physical interpretation when the processes differ in drift, while more generally allowing dependencies across transitions that cannot be represented by additive local objectives.

Specifically, the correction is the path log-density ratio \begin{equation} \label{eq:path_ratio_target} G^*(\Phi;c) = \log \frac{p_{\mathrm{data}}(\Phi\mid c)} {p_{\mathrm{ref}}(\Phi\mid c)}. \end{equation} With $S_{\mathrm{ref}}(\Phi;c)=-\log p_{\mathrm{ref}}(\Phi\mid c)$, the target action is \begin{equation} \label{eq:action_decomposition} S^*(\Phi;c) = \underbrace{S_{\mathrm{ref}}(\Phi;c)}_{\text{reference action}} - \underbrace{G^*(\Phi;c)}_{\text{learned correction}} + \alpha(c). \end{equation} We estimate $G^*$ from comparisons between observed and reference trajectories and combine it with the reference action. The resulting $S_\theta$ is the learned stochastic action used for trajectory inference.

\textbf{Learning the action correction.} To estimate $G^*$, we compare observed trajectories with those drawn from the reference process~\citep{gutmann2010noise}. For each context $c$, we draw one observed trajectory $\Phi^+\sim p_{\mathrm{data}}(\cdot\mid c)$ and $K$ reference trajectories $\Phi_k^-\sim p_{\mathrm{ref}}(\cdot\mid c)$. We parameterize the correction by $G_\theta(\Phi;c)$ and optimize the setwise comparison objective, which has the InfoNCE form~\citep{oord2019representationlearningcontrastivepredictive}

 \begin{equation} \label{eq:cl_main} \mathcal L_K(\theta) = -\mathbb E \left[ \log \frac{ \exp G_\theta(\Phi^+;c) }{ \exp G_\theta(\Phi^+;c) + \sum_{k=1}^{K} \exp G_\theta(\Phi_k^-;c) } \right]. \end{equation}

This objective has the desired correction as its population solution. Under the conditions in \Cref{app:bayes_ratio}, every population minimizer satisfies \begin{equation} \label{eq:ratio_recovery} G_{\theta^*}(\Phi;c) = G^*(\Phi;c)+a(c), \end{equation} for every $K\geq1$. The context-dependent term cancels from each comparison, so the objective identifies the part of the correction that varies across trajectories.

Because the comparison objective leaves this context-dependent constant undetermined, we fix the additive convention after training by centering the correction under the reference distribution, \begin{equation} \label{eq:centered_ratio} \bar G_\theta(c) = \mathbb E_{\Phi\sim p_{\mathrm{ref}}(\cdot\mid c)} \left[G_\theta(\Phi;c)\right], \qquad \widetilde G_\theta(\Phi;c) = G_\theta(\Phi;c)-\bar G_\theta(c). \end{equation} The learned action is then \begin{equation} \label{eq:anchored_energy} S_\theta(\Phi;c) = -\log p_{\mathrm{ref}}(\Phi\mid c) - \widetilde G_\theta(\Phi;c). \end{equation} At the population solution, this recovers the data action up to a constant depending on the context, which does not affect path rankings or optimization within that context.

\subsection{Reference process}
\label{sec:method_reference}

The reference defines the baseline path measure and the inductive bias of the learned action. This means we need a tractable path density, efficient sampling and sufficient probability mass over the trajectories represented in the data. While its drift can include an available nominal drift or force model, LEAP does not necessarily require one. For rare-event trajectories, we condition the reference on the same event as the data so that both contain trajectories selected by the same condition.

We use a discrete-time Gaussian Markov reference because its transition law is analytic and parameterized directly by drift and diffusion. Under shared diffusion, differences in drift induce an exact path log-likelihood ratio, giving the learned correction a direct dynamical interpretation. We define\looseness=-1
\begin{equation}
\label{eq:ref_dynamics}
x_{k+1}
=
x_k
+
b_{\mathrm{ref}}(x_k,t_k;c)\Delta t
+
\sqrt{2D\Delta t}\,\xi_k,
\qquad
\xi_k\sim\mathcal N(0,I),
\end{equation}
where $b_{\mathrm{ref}}$ is the reference drift and $D$ is the diffusion coefficient. This induces the path density $p_{\mathrm{ref}}(\Phi\mid c)$ and reference action $S_{\mathrm{ref}}(\Phi;c)$. For endpoint or terminal-region conditioning, $p_{\mathrm{ref}}$ denotes the corresponding conditional density over the interior knots; its normalizer contributes only a context-dependent constant (see \Cref{app:matched_conditioning}). \looseness=-1

\subsection{Local and global action corrections}
\label{sec:method_parameterization}

When the data and reference processes are Gaussian Markov processes with the same diffusion coefficient and differ only in drift, their exact path log-likelihood ratio decomposes over transitions. We use this structure when the drift difference is the quantity of interest. Let $v_k=(x_{k+1}-x_k)/\Delta t$ and let $u_\theta$ parameterize $b_{\mathrm{data}}-b_{\mathrm{ref}}$. The correction is \looseness=-1
\begin{equation}
\label{eq:ratio_param}
G_\theta(\Phi;c)
=
\sum_{k=0}^{N-1}
\left[
\underbrace{
\frac{1}{2D}\,
u_\theta(x_k,t_k;c)^\top
\bigl(v_k-b_{\mathrm{ref}}(x_k,t_k;c)\bigr)
}_{\text{velocity alignment}}
-
\underbrace{
\frac{1}{4D}\,
\|u_\theta(x_k,t_k;c)\|^2
}_{\text{normalization cost}}
\right]\Delta t.
\end{equation}
When the data and reference processes also share the initial law and time grid, and differ only through their drift, \Cref{eq:ratio_param} is the exact path log-likelihood ratio for $b_{\mathrm{data}}=b_{\mathrm{ref}}+u^*$; see \Cref{app:em_ratio}. For stopped or padded trajectories, we sum only over active transitions as described in the appendix.

More generally, LEAP does not require a Gaussian drift parameterization. When the log density ratio between the data and reference distributions decomposes across transitions, LEAP-Local learns a scalar correction $g_\theta$ for each transition and sums these corrections along the trajectory,\begin{equation} \label{eq:local_correction} 
\vspace{-5pt}G_\theta^{\mathrm{local}}(\Phi;c) = \sum_{k=0}^{N-1} g_\theta(x_k,x_{k+1},t_k;c). 
\end{equation}
This factorization can fail when information is shared across the trajectory. For example, marginalizing a latent variable that remains fixed throughout a trajectory can couple multiple transitions. LEAP-Global therefore models the correction directly from the
complete path,
\begin{equation} \label{eq:global_correction} G_\theta^{\mathrm{global}}(\Phi;c) = r_\theta(\Phi;c). \end{equation}

LEAP-Hybrid combines the two parameterizations, \begin{equation} \label{eq:hybrid_correction} G_\theta^{\mathrm{hybrid}}(\Phi;c) = G_\theta^{\mathrm{local}}(\Phi;c) + G_\theta^{\mathrm{global}}(\Phi;c). \end{equation} The global term captures interactions across transitions that cannot be written as an additive transition correction. When both terms are learned jointly, only their sum is identified, so interpreting the local term as a physical drift correction requires additional restrictions.

For conservative systems, we optionally parameterize the total drift as the gradient of a potential,
\begin{equation}
\label{eq:conservative}
b_{\mathrm{ref}}(x,t;c)+u_\theta(x,t;c)
=
-\nabla U_\theta(x;c).
\end{equation}

\subsection{Downstream trajectory inference}
\label{sec:method_downstream}

Because $S_\theta$ is an action on complete trajectories, we can reuse it after training. The same learned object can rank candidate paths, infer representative trajectories, incorporate mechanism preferences, impose soft constraints and define trajectory distributions. See \Cref{app:inference} for details.\looseness=-1

\begin{figure}[t]
    \centering
    \captionsetup{font=small}

    \includegraphics[width=\linewidth]{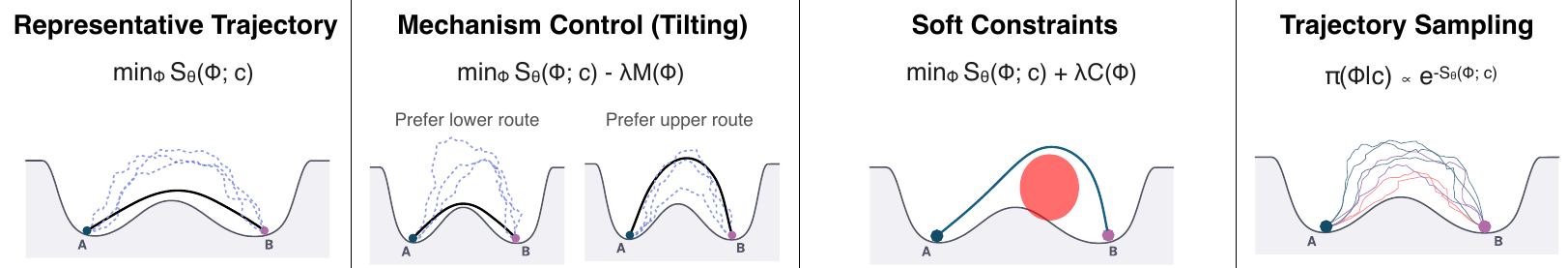}

    \caption{\textbf{Downstream trajectory inference with the learned action.}
    Once $S_\theta(\Phi;c)$ is learned, the same action supports representative path inference, mechanism steering (tilting), constrained inference and sampling.}
    \label{fig:downstream_tasks}
    \vspace{-15pt}
\end{figure}

\textbf{Trajectory ranking and representative inference.}
Given feasible trajectories $\Gamma(c)$, e.g., paths satisfying boundary conditions, we can rank candidate trajectories by their action or infer the lowest-action representative trajectory by solving
\begin{equation}
\label{eq:map_problem}
\Phi^*(c)
=
\arg\min_{\Phi \in \Gamma(c)}
S_\theta(\Phi;c).
\end{equation}

\textbf{Mechanism steering and constraints.}
We steer the representative path by adding trajectory-level functionals. Given a desired mechanism preference $M(\Phi)$ and a constraint penalty $C(\Phi)$, we solve
\begin{equation}
\label{eq:traj_modification}
\arg\min_{\Phi \in \Gamma(c)}
\Bigl[
S_\theta(\Phi;c)
-\lambda_M M(\Phi)
+\lambda_C C(\Phi)
\Bigr].
\end{equation}
Changing $\lambda_M$ or $\lambda_C$ changes the inference problem without retraining the model.\looseness=-1

\textbf{Compositional trajectory distributions.}
The learned action defines an unnormalized trajectory distribution. Given mechanism preferences $M_j$ with weights $\gamma_j$ and penalties $C_\ell$ with weights $\lambda_\ell$,
\begin{equation}
\label{eq:composed_path_density}
\pi(\Phi \mid c)
\propto
\exp\!\left(
-S_\theta(\Phi;c)
+\sum_j \gamma_j M_j(\Phi)
-\sum_\ell \lambda_\ell C_\ell(\Phi)
\right).
\end{equation}
Sampling from $\pi$ gives a family of plausible paths weighted by the specified preferences/penalties.\looseness=-1

\textbf{Metropolis-Hastings (MH) sampling.}
At fixed context, $S_\theta$ defines an unnormalized distribution over paths. For a proposal kernel $q(\Phi'\mid\Phi)$ on $\Gamma(c)$, the MH acceptance probability is
\begin{equation}
\label{eq:mh_accept_general}
A(\Phi,\Phi')
=
\min\left\{
1,\,
\exp\bigl(-S_\theta(\Phi';c)+S_\theta(\Phi;c)\bigr)
\frac{q(\Phi\mid\Phi')}{q(\Phi'\mid\Phi)}
\right\}.
\end{equation}
For symmetric proposals, this reduces to $A(\Phi,\Phi')
=
\min\left\{
1,\,
\exp\bigl(-S_\theta(\Phi';c)+S_\theta(\Phi;c)\bigr)
\right\}.$
Any context-dependent additive constant in $S_\theta$ cancels in the acceptance ratio.\looseness=-1

\begin{figure}[t]
\begin{minipage}{\linewidth}
\begin{algorithm}[H]
\caption{LEAP training}
\label{alg:leap_train}
\begin{algorithmic}[1]
\REQUIRE dataset $\mathcal D=\{(\Phi_i^+,c_i)\}$, reference process
$p_{\mathrm{ref}}(\Phi\mid c)$, reference count $K$
\WHILE{not converged}
    \STATE sample minibatch $\{(\Phi_i^+,c_i)\}_{i=1}^B \subset \mathcal D$
    \FOR{$i=1$ to $B$}
        \STATE sample $\{\Phi_{i,k}^-\}_{k=1}^K
        \sim p_{\mathrm{ref}}(\cdot\mid c_i)$
        \STATE evaluate
        $G_i^+\leftarrow G_\theta(\Phi_i^+;c_i)$ and
        $G_{i,k}^-\leftarrow G_\theta(\Phi_{i,k}^-;c_i)$
    \ENDFOR
    \STATE update $\theta$ by a gradient step on the minibatch
    estimate of \Cref{eq:cl_main}
\ENDWHILE
\RETURN learned action $S_\theta$ in \Cref{eq:anchored_energy}
\end{algorithmic}
\end{algorithm}
\end{minipage}
\vspace{-15pt}
\end{figure}

\section{Why LEAP recovers an action} \label{sec:theory}

We give three guarantees for action learning. The first recovers the data action from the comparison objective, the second identifies the drift correction under the structured model, and the third bounds downstream inference error from action error. Full assumptions and proofs are in~\Cref{app:theory}.

\textbf{The comparison objective recovers the action.} Fix a context $c$. Assume that the data and reference distributions have positive densities on the same support, the candidates are conditionally independent and the score class contains $G^*$. Every population minimizer of \Cref{eq:cl_main} satisfies \begin{equation} \label{eq:main_bayes_ratio} G_{\theta^*}(\Phi;c) = \log \frac{p_{\mathrm{data}}(\Phi\mid c)} {p_{\mathrm{ref}}(\Phi\mid c)} + a(c). \end{equation} If $\mathbb E_{p_{\mathrm{ref}}}|G^*|<\infty$, reference centering then gives \begin{equation} \label{eq:main_data_action} S_{\theta^*}(\Phi;c) = S^*(\Phi;c)+\beta(c). \end{equation} The context-dependent constant keeps optimization solutions and MH ratios the same. \looseness=-1

\textbf{The structured correction recovers drift differences.} For Euler--Maruyama dynamics with the same diffusion, time step and initial distribution, changing the drift changes the path log-likelihood by the structured correction in \Cref{eq:ratio_param}. Matching the unconditioned path distribution identifies the drift at states and times visited by the process. If data and reference trajectories are selected using the terminal event, this changes the path log-likelihood ratio by condition-dependent constants. Recovering the unconditioned drift from selected trajectories needs variation across terminal conditions and the structured drift parameterization. See \Cref{sec:doublewell} for the recovery; \Cref{app:gaussian,app:matched_conditioning} for derivations. \looseness=-1

\textbf{Action error controls later inference.} Fix a context $c$ and an inference domain $\Gamma(c)$. Suppose \[ \sup_{\Phi\in\Gamma(c)} \left| S_\theta(\Phi;c)-S^*(\Phi;c)-\beta(c) \right| \leq \varepsilon. \] For the same added functional $H$, let $F_\theta=S_\theta+H$ and $F^*=S^*+H$. If $\widehat\Phi$ is $\delta$-suboptimal for $F_\theta$, \begin{equation} \label{eq:main_reuse_bound} F^*(\widehat\Phi) - \inf_{\Phi\in\Gamma(c)}F^*(\Phi) \leq 2\varepsilon+\delta. \end{equation} If $\pi_\theta\propto e^{-F_\theta}$ and $\pi^*\propto e^{-F^*}$ have finite, positive normalizing constants, then \begin{equation} \label{eq:main_distribution_bound} \left| \log \frac{\pi_\theta(\Phi\mid c)} {\pi^*(\Phi\mid c)} \right| \leq 2\varepsilon. \end{equation} Thus the same action error bounds both the objective value of an inferred trajectory and the change in its normalized trajectory density. \looseness=-1

\vspace{-1pt}
\section{Experiments} \label{sec:experiments}
\vspace{-5pt}
Our experiments evaluate four properties of action learning. We show that LEAP can recover physical dynamics from selected trajectories, capture dependencies across multiple transitions, improve as the reference moves closer to the data and reuse the learned action under new physical conditions and inference requests. See~\Cref{app:protocols} for the full experimental details.\looseness=-1

\vspace{-8pt}

\subsection{Recovering dynamics under terminal selection}
\label{sec:doublewell}
\vspace{-5pt}
 We first show that LEAP recovers the underlying drift when training trajectories are selected by their final state. We use the overdamped diffusion $dX_t=b(X_t)\,dt+\sqrt{2D}\,dW_t$ with $b(x)=x-x^3$ and $U(x)=(x^2-1)^2/4$. In this double well setting, trajectories start at $x_0=-1$ and are kept when $x_T\in[0.9,1.1]$ at $T=3$. We use $5{,}000$ selected trajectories per seed with $D=0.20$ and $\Delta t=0.01$, and a Brownian reference conditioned on the same terminal window. Conditioning on the final state changes the distribution of one-step increments, so local regression does not directly estimate the original drift. LEAP instead compares complete selected paths with the matched reference and learns the drift through the structured action correction. \looseness=-1

\begin{figure*}[t]
    \centering
    \captionsetup{font=small}

    \begin{minipage}[t]{0.36\textwidth}
        \centering
        \vspace{0pt}

        \includegraphics[
            width=\linewidth
        ]{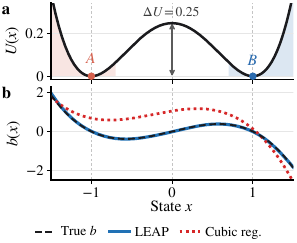}%

        \begin{subfigure}{0pt}
            \phantomsubcaption
            \label{fig:dw-potential}
        \end{subfigure}%
        \begin{subfigure}{0pt}
            \phantomsubcaption
            \label{fig:drift-reconstruction}
        \end{subfigure}

        \vspace{-14pt}
        \captionof{figure}{\textbf{Double well recovery.}}
        \label{fig:dw-recovery}
    \end{minipage}
    \hfill
    \begin{minipage}[t]{0.61\textwidth}
        \centering
        \vspace{0pt}
        \small
        \setlength{\tabcolsep}{5pt}

        \begin{tabular*}{\linewidth}{
            @{\extracolsep{\fill}}lccc@{}
        }
        \toprule
        & Drift MSE $\downarrow$
        & \multicolumn{2}{c}{True path cost $\downarrow$} \\
        \cmidrule(l){3-4}
        Method & & $T=4$ & $T=16$ \\
        \midrule
        Kramers-Moyal
        & $0.813$
        & --- & --- \\
        Cubic velocity reg.
        & $0.775$
        & $0.319$
        & $0.613$ \\
        Zero field
        & $0.0759$
        & $0.264$
        & $0.445$ \\
        Geometric path
        & ---
        & $0.264$
        & $0.264$ \\
Analytic drift
        & $0$
        & $0.253$
        & $0.250$ \\
        \midrule
        \textbf{LEAP}
        & $\mathbf{3.3\times10^{-4}}$
        & $\mathbf{0.253}$
        & $\mathbf{0.251}$ \\
        \bottomrule
        \end{tabular*}
        \vspace{-2pt}
        \captionof{table}{
            \textbf{Physical recovery and path inference.}
            Drift error and path cost under new endpoints and horizons.
        }
        \label{tab:dw-recovery}
    \end{minipage}

    \vspace{-13pt}
\end{figure*}

\textbf{Drift recovery.} We compare LEAP with Kramers-Moyal, which estimates drift from binned one-step increments, and cubic velocity regression using $\{1,x,x^2,x^3\}$, which contains the exact drift $x-x^3$. LEAP achieves drift MSE $(3.3\pm2.7)\times10^{-4}$, compared to $0.8132\pm0.0030$ for Kramers-Moyal, $0.7752\pm0.0043$ for cubic regression and $0.0759$ for the zero field (\Cref{fig:dw-recovery,tab:dw-recovery}). Even though cubic regression can represent the exact drift, it fits the one-step dynamics of the terminally selected trajectories, which follow a Doob $h$-transformed drift that differs from the original drift (\Cref{app:matched_conditioning}). \looseness=-1

\textbf{Path inference under new endpoints.} Next, we keep the learned potential fixed and change the endpoint from the training region to the saddle $x_T=0$, and vary the horizon. At $T=4$, LEAP gives true path cost $0.2533\pm0.0001$ versus the analytic value $0.2532$. At $T=16$, LEAP gives $0.2506\pm0.0006$ versus $0.2500$, while cubic velocity regression gives $0.6126\pm0.0079$ and the geometric path gives $0.2637$. The same learned action can still infer near-optimal paths after changing both the endpoint and the horizon. \looseness=-1

\textbf{Reference choice.} We retrain LEAP with different references on an unconditioned double well, where the exact path density is available. Changing the reference drift from $0$ to $0.5b$ reduces the correction error $\mathrm{NRMSE}_{\Delta G}$ from $0.66$ to $0.223$, where $\mathrm{NRMSE}_{\Delta G}$ compares predicted and exact correction differences within each context and a constant prediction gives $1$ (\Cref{app:metrics}), showing that a closer reference leaves a smaller correction to learn. See \Cref{app:reference_studies} for the full sweep.

\vspace{-7pt}
\subsection{Recovering a nonadditive trajectory action} \label{sec:orbital}
\vspace{-7pt}
Next, we test whether LEAP can learn a correction that depends on several transitions. 
These settings may appear when an unobserved physical property is fixed during a trajectory, like an unknown mass in orbital dynamics, friction coefficient in Langevin dynamics or material parameter in a mechanical system, and its effect accumulates over the full path. We model this by sampling $z\in\{-1,+1\}$ once and changing the solar mass to $m_\odot(1+z\eta)$. The same unobserved $z$ is fixed for the entire trajectory, so recovering the observed path likelihood needs combining information across time. We use a $31$-body orbital system initialized from JPL HORIZONS, giving a $186$-dimensional state space~\citep{1997BAAS...29.1099G}. \looseness=-1

Conditional on $z$, the path density factorizes over transitions. After marginalizing $z$, the exact data-to-reference correction becomes
\vspace{-3pt}
\begin{equation}
\label{eq:orbital_exact_mixture}
G^*(\Phi;c)=\log\left[
\tfrac12 e^{\ell_{+1}(\Phi;c)}+\tfrac12 e^{\ell_{-1}(\Phi;c)}
\right]-\ell_0(\Phi;c),
\end{equation}
where $\ell_z=\sum_k \ell_{z,k}$ and $\ell_0$ uses the nominal solar mass. Defining $d_k=(\ell_{+1,k}-\ell_{-1,k})/2$ and $m_k=(\ell_{+1,k}+\ell_{-1,k})/2-\ell_{0,k}$ gives \vspace{-6pt} \begin{equation} \label{eq:orbital_memory_result} G^*(\Phi;c) = \sum_k m_k + \log\cosh A, \qquad A=\sum_k d_k. \vspace{-6pt} \end{equation} Its mixed derivatives satisfy $\partial^2G^*/(\partial d_i\partial d_j)>0$ for $i\neq j$, whereas a sum of separate transition corrections has zero mixed derivatives. Therefore no additive transition model can represent the exact correction; see \Cref{app:memory_theory}.

\textbf{Local and complete-path models.} We evaluate whether the correction can be recovered from individual transitions or needs information from the whole trajectory. LEAP-Local uses the local correction from \Cref{eq:local_correction}. As a baseline with different supervision, Transition NCE applies noise-contrastive estimation~\citep{gutmann2010noise} directly to data and reference transitions using the same transition inputs. LEAP-Global uses the full trajectory and features accumulated from the nominal gravitational dynamics and its sensitivity to the solar mass (\Cref{tab:orbital_structure}). These inputs do not have the hidden variable $z$ or the exact correction $G^*$. \looseness=-1

LEAP-Local and Transition NCE stay at the constant prediction level, with NRMSE $1.0013$ and $1.0008$, while LEAP-Global reaches $0.228$, i.e., it accounts for about $95\%$ of the squared variation in the exact correction differences. This shows that the correction cannot be recovered from separate transitions and requires information across the trajectory. Consistent with this result, \Cref{fig:orbital-correction} shows that LEAP-Global follows the exact $\log\cosh A$ dependence and recovers the hidden mass perturbation as $1.02\times10^{-4}$, compared to the true $1.04\times10^{-4}$. As a trajectory diffusion comparison, GeoTDM~\citep{han2024geometric} is trained separately on the same data and reference paths, and the difference in its objectives gives AUROC $0.4999$; see \Cref{app:diffusion}.

\begin{figure*}[t]
    \centering
    \captionsetup{font=small}

    \begin{minipage}[t]{0.42\textwidth}
        \centering
        \vspace{0pt}

        \includegraphics[
            width=\linewidth,
            keepaspectratio
        ]{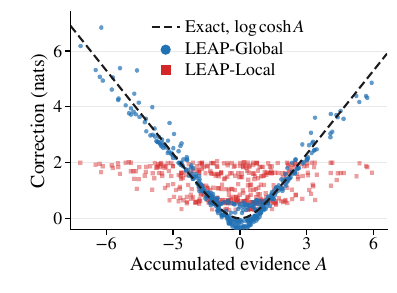}

        \vspace{-6pt}
       \captionof{figure}{
\textbf{Recovery of the nonadditive orbital correction.}
LEAP-Global follows the exact $\log\cosh A$ dependence across held-out trajectories.
}
        \label{fig:orbital-correction}
    \end{minipage}
    \hfill
    \begin{minipage}[t]{0.56\textwidth}
        \centering
        \vspace{0pt}
        \scriptsize
        \setlength{\tabcolsep}{3pt}
        \begin{tabular*}{\linewidth}{@{\extracolsep{\fill}}llc@{}}
        \toprule
        Model & Input & $\mathrm{NRMSE}_{\Delta G}\downarrow$ \\
        \midrule
        LEAP-Local
& One transition
& $1.0013{\scriptscriptstyle \pm 0.0015}$ \\
Transition NCE
& One transition
& $1.0008{\scriptscriptstyle \pm 0.0013}$ \\
\textbf{LEAP-Global}
& \textbf{Complete path + features}
& $\mathbf{0.228}{\scriptscriptstyle \mathbf{\pm 0.011}}$ \\
        \bottomrule
        \end{tabular*}
        \vspace{-2pt}
        \captionof{table}{
\textbf{Ablation on trajectory information.}
Models using one transition at a time remain near the constant prediction
level, while the complete-path model reduces NRMSE.
}
\label{tab:orbital_structure}
        \vspace{8pt}
        \setlength{\tabcolsep}{3pt}
        \begin{tabular*}{\linewidth}{@{\extracolsep{\fill}}llc@{}}
        \toprule
        Objective & Input & $\mathrm{NRMSE}_{\Delta G}\downarrow$ \\
        \midrule
Categorical NRE (LEAP)
& Transformer + features
& $0.228{\scriptscriptstyle \pm 0.015}$ \\
        Binary NRE
& Path encoder
& $1.00000{\scriptscriptstyle \pm 3\times 10^{-5}}$ \\
\textbf{Binary NRE}
& \textbf{Transformer + features}
& $\mathbf{0.175}{\scriptscriptstyle \mathbf{\pm 0.011}}$ \\
        \bottomrule
        \end{tabular*}
        \vspace{-2pt}
      \captionof{table}{
\textbf{Ablation on accumulated features.}
Adding the accumulated features reduces error for both binary and categorical
ratio objectives.
}
\label{tab:orbital_features}

    \end{minipage}

    \vspace{-15pt}
\end{figure*}

\textbf{Accumulated features.} We ablate these features while keeping a complete-path model, using binary and categorical neural ratio estimation (NRE) objectives~\citep{miller2024contrastiveneuralratioestimation}. Without them, binary NRE gives NRMSE $1.00000\pm3\times10^{-5}$. Adding them reduces error to $0.175\pm0.011$ with binary NRE and $0.228\pm0.015$ with categorical NRE (\Cref{tab:orbital_features}). Both objectives improve, showing that the accumulated information from \Cref{eq:orbital_memory_result} drives the decrease. Because both are ratio estimators on complete paths, the binary result also shows that action learning is not tied exactly to the categorical objective in \Cref{eq:cl_main}. \looseness=-1

\textbf{Scaling.} At $42$ bodies and $256$ transitions, each path has $64{,}764$ coordinates. LEAP-Global reaches NRMSE $0.111$, while LEAP-Local stays at $1.0015$. Thus, even as the system grows to more bodies and longer trajectories, LEAP-Global can accumulate the small effect of the hidden mass perturbation across transitions, while models that treat each transition separately still stay near the constant prediction baseline. See \Cref{app:orbital_scaling} for the full sweep.

\subsection{Reusing learned actions across conditions and objectives} \label{sec:reuse_results}

In molecular simulation, trajectories may come from a high-dimensional simulator but be analyzed in a small set of collective coordinates, i.e., the effective dynamics are not available in closed form. These coordinates are used to identify metastable states and describe transition pathways. They are also used in enhanced sampling and restrained simulations, where one may require a path to pass through a particular basin or compare conformational pathways across temperatures. We therefore evaluate whether LEAP can learn one action in these reduced coordinates and reuse it across these different settings. We use alanine dipeptide represented by the backbone angles $(\phi,\psi)$ on the Ramachandran torus~\citep{vargas2002alanine}. OpenMM Langevin simulations provide $169$ reactive C7eq$\to$C7ax trajectories at $500$\,K and $613$ at $600$\,K~\citep{eastman2023openmm8moleculardynamics}. \looseness=-1

\textbf{Completion from partial observations.} We keep the learned action fixed and optimize the missing states. When the endpoints and three interior states are observed, LEAP gives circular MSE $0.0931\pm0.0016$ at $600$\,K, compared to $0.119\pm0.012$ for conditional DDPM~\citep{3495724.3496298} and $0.125\pm0.011$ for conditional flow matching~\citep{lipman2023flow}. Other learned actions that share the same reference, from velocity regression and binary NRE, reach similar error ($0.0900$ and $0.0897$), so with dense observations the gain over the generators comes from optimizing a learned path objective. With endpoints only, the observations constrain the path less and the learned action matters more. Among action-based methods, LEAP then gives the lowest error, $0.6225$ compared to $0.6609$ for velocity regression and $0.6682$ for binary NRE. That said, the conditional generators, which are trained with completion masks, achieve lower error in this setting. LEAP is trained only on complete trajectories and uses the same learned action for both observation patterns, so the observation pattern can change without training a new model. \Cref{fig:alanine}a shows one example, with full results in \Cref{app:alanine_details}.

\textbf{Transfer across temperatures.} We keep the learned correction fixed and use the diffusion, temperature and reference process of the target system. Training at $600$\,K gives mean within-context AUROC $0.9579\pm0.0057$ at $500$\,K, compared to $0.9403\pm0.0099$ for direct $500$\,K training, likely because the $600$\,K training set is larger ($463$ vs $130$ trajectories). Training at $500$\,K gives $0.9719\pm0.0068$ at $600$\,K, compared to $0.9754\pm0.0012$ for direct $600$\,K training. The learned correction therefore remains useful after changing temperature. \looseness=-1

\textbf{Adding preferences and constraints.} Next, using the same learned $600$\,K action, we change the inference objective. Increasing the basin preference weight from $0$ to $20$ increases the mean preference $M$ from $0.0658$ to $0.2517$, while the projected transition cost increases from $0.9389$ to $0.9809$ nats per active step. We also impose $|\phi|\leq150^\circ$ while keeping the same basin preference. In \Cref{fig:alanine}c, the preference-only path enters the excluded angular region, while the constrained path stays within the bound and reaches the same endpoint. Thus, we can add preferences and geometric constraints without retraining the learned action. \looseness=-1

\begin{figure*}[t]
\centering
\captionsetup{font=small}
\begin{subfigure}[b]{0.3449\textwidth}
    \includegraphics[width=\linewidth]{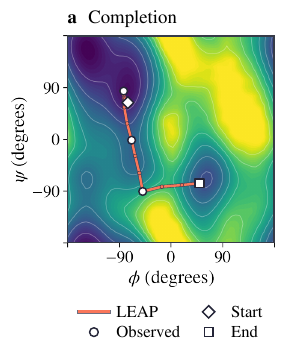}
    \phantomsubcaption\label{fig:alanine-completion}
\end{subfigure}%
\begin{subfigure}[b]{0.2758\textwidth}
    \includegraphics[width=\linewidth]{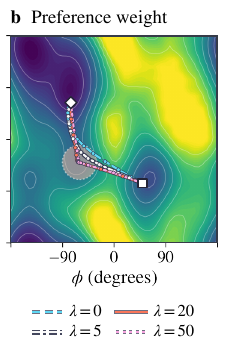}
    \phantomsubcaption\label{fig:alanine-preference}
\end{subfigure}%
\begin{subfigure}[b]{0.3793\textwidth}
    \includegraphics[width=\linewidth]{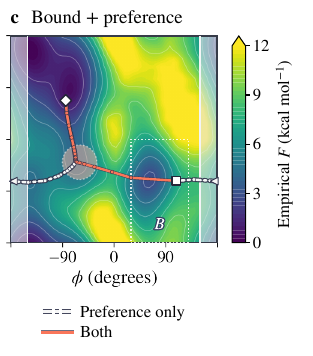}
    \phantomsubcaption\label{fig:alanine-compose}
\end{subfigure}
\vspace{-32pt}
\caption{\textbf{Reuse of a fixed molecular action.}
All panels use the same $600$\,K LEAP action.
(a) Completion from endpoints and three interior observations.
(b) Increasing the basin preference raises $M$ from $0.23$ to $0.81$ for the shown path.
(c) Adding $|\phi|\leq150^\circ$ changes the route while preserving the terminal condition.
Diamonds and squares are endpoints, dots are saved frames and the background is empirical free energy. \looseness=-1}
\label{fig:alanine}
\vspace{-15pt}
\end{figure*}

\section{Conclusion} \label{sec:discussion}
\vspace{-10pt}

In this work, we introduced \emph{action learning}, which treats the stochastic action as a reusable learning target for trajectory data, and LEAP, which learns this action by comparing complete trajectories. In the double well setting, the learned action recovers the underlying physical drift and supports path inference under new endpoints and horizons. In the orbital benchmark, it captures interactions across trajectories induced by a persistent latent regime that cannot be represented by additive transition models. In molecular coordinates, the same learned action enables completion, temperature transfer, basin preferences and angular constraints without retraining. These experiments show that learning an objective on path space provides a common interface between trajectory data and diverse downstream inference problems, including settings where the reduced dynamics are unavailable in closed form. Looking forward, we hope action learning encourages a broader view for trajectory modeling, in particular, shifting the focus to learning reusable objectives over complete paths.
\looseness=-1

\textbf{Limitations.}
LEAP requires sufficient overlap between the reference and observed path distributions, as poor overlap makes the learned correction harder to estimate accurately. While the structured correction in \Cref{eq:ratio_param} is exact for Gaussian Markov processes with shared diffusion, state-dependent diffusion requires more likelihood ratio terms. Finally, projecting dynamics onto reduced coordinates can introduce memory and friction or diffusion that depends on the state~\citep{Ma_2016,Vroylandt_2022}. Our alanine experiments therefore learn an effective action in $(\phi,\psi)$ rather than the full molecular action
\looseness=-1

\section*{Impact statement}
This paper presents work whose goal is to advance the field of Machine Learning. Learned trajectory objectives may help analyze and steer dynamical systems in science, robotics, planning and other sequential domains. As with other learned objectives, they could also be used to steer systems toward harmful outcomes, so deployment requires domain validation and careful attention to the data.\looseness=-1

\section*{Acknowledgements}
SL is supported by a Kempner Graduate Fellowship.

\bibliography{references}
\bibliographystyle{iclr2027_conference}

\appendix

\clearpage

\section*{Appendix}
\label{app:roadmap}
\numberwithin{equation}{section}

The appendix is organized as follows. \Cref{app:theory} presents the theoretical results on action recovery, Gaussian drift corrections, conditioning, nonadditive actions and downstream inference. \Cref{app:protocols} describes the shared evaluation metrics, training procedures and inference protocols. \Cref{app:wells} provides additional details on the double well experiments, while \Cref{app:reference_studies} examines reference choice, temporal resolution and training budget. \Cref{app:orbital} presents the orbital results, model comparisons and scaling experiments. Finally, \Cref{app:reuse_studies} details the molecular experiments, including completion, temperature transfer, preferences, constraints and additional validation.

\section{Theory and derivations} \label{app:theory}

We additionally provide the assumptions and derivations in~\Cref{sec:theory}, which includes population action recovery, the structured Gaussian correction and drift interpretation, matched trajectory selection, nonadditive actions and downstream stability.

\textbf{Notation.} Fix a context $c$ and let $\Omega_c$ be the trajectory space. The densities $p_{\mathrm{data}}(\cdot\mid c)$ and $p_{\mathrm{ref}}(\cdot\mid c)$ are defined with respect to the same base measure on $\Omega_c$. We use the target correction $G^*$ and target action $S^*$ defined in \Cref{eq:path_ratio_target,eq:target_action}.

\subsection{Action recovery from trajectory comparisons}
\label{app:bayes_ratio}

Recall that LEAP learns the correction by identifying one observed trajectory among $K$ reference trajectories drawn under the same context. At the population level, this objective recovers the trajectory density ratio for every $K\geq1$.

Fix a context $c$ and write $p=p_{\mathrm{data}}(\cdot\mid c)$ and $q=p_{\mathrm{ref}}(\cdot\mid c)$, so that \[ G^*(\Phi;c)=\log\frac{p(\Phi)}{q(\Phi)}. \] Draw an index $I$ uniformly from $\{0,\ldots,K\}$. Conditional on $I=i$, independently draw \begin{equation} \label{eq:app_set_sampling} \Phi_i\sim p, \qquad \Phi_j\sim q,\quad j\neq i. \end{equation} The corresponding population objective is \begin{equation} \label{eq:app_setwise_objective} \mathcal L_K(G;c) = -\mathbb E\left[ \log \frac{\exp G(\Phi_I;c)} {\sum_{j=0}^{K}\exp G(\Phi_j;c)} \,\middle|\,c \right]. \end{equation} By symmetry, this is the population form of \Cref{eq:cl_main}.

\begin{theorem}[Population action recovery] \label{thm:app_action_recovery} Suppose $p$ and $q$ are positive on the same support, the candidates in \Cref{eq:app_set_sampling} are conditionally independent and the model class contains $G^*$. Then, for every $K\geq1$, every population minimizer satisfies \begin{equation} \label{eq:app_score_recovery} G_{\theta^*}(\Phi;c) = G^*(\Phi;c)+a(c) \end{equation} almost everywhere, where $a(c)$ is independent of $\Phi$. If $\mathbb E_q|G^*|<\infty$, then the reference-centered action satisfies \begin{equation} \label{eq:app_action_recovery} S_{\theta^*}(\Phi;c) = S^*(\Phi;c)+\beta(c) \end{equation} for a constant $\beta(c)$ independent of $\Phi$. \end{theorem}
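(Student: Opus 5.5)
The plan is to view the setwise objective in \Cref{eq:app_setwise_objective} as the expected log-loss of a $(K+1)$-way classifier that predicts the index $I$ of the data trajectory from the unordered-by-label tuple $(\Phi_0,\dots,\Phi_K)$. First I compute the Bayes posterior. Under \Cref{eq:app_set_sampling} the joint density of $(I=i,\Phi_{0:K})$ is $\frac{1}{K+1}p(\Phi_i)\prod_{j\neq i}q(\Phi_j)$. Dividing by $\prod_j q(\Phi_j)$, which is positive on the common support, gives
\[
P(I=i\mid \Phi_{0:K})=\frac{\exp G^*(\Phi_i;c)}{\sum_{j=0}^K \exp G^*(\Phi_j;c)}.
\]
Thus the true posterior is exactly the softmax of $G^*$.

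Next I decompose the population loss for any score $G$ as
\[
\mathcal L_K(G;c)=H(I\mid\Phi_{0:K})+\mathbb E\bigl[\mathrm{KL}\bigl(P(\cdot\mid\Phi_{0:K})\,\|\,Q_G(\cdot\mid\Phi_{0:K})\bigr)\bigr],
\]
where $Q_G$ is the softmax of $G$. The first term does not depend on $G$, and it is finite as long as $I$ takes finitely many values. Since the model class contains $G^*$, the infimum is attained at $G^*$ and equals the entropy term. Hence any minimizer $G_{\theta^*}$ must have zero expected KL, and so $Q_{G_{\theta^*}}=P$ for almost every tuple under the mixture law of $\Phi_{0:K}$, which is equivalent to $\prod q$.

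I expect the main obstacle to be turning this almost-everywhere equality of softmaxes into a pointwise statement about $G$ up to a constant. Equality of the softmaxes means that, for almost every tuple, $h:=G_{\theta^*}-G^*$ satisfies $h(\Phi_i)-h(\Phi_0)=0$ for all $i$. Since $K\ge1$, I can take $i=1$: the function $(\phi,\phi')\mapsto h(\phi)-h(\phi')$ vanishes $q\otimes q$-almost everywhere. By Fubini there is some $\phi'$ for which $h(\phi)=h(\phi')=:a(c)$ for $q$-almost every $\phi$. Because $p$ and $q$ share a support, this also holds $p$-almost everywhere, which gives \Cref{eq:app_score_recovery}. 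Some care is needed with whether $G$ may take the value $\pm\infty$ or be unmeasurable, and I would assume the scores are real-valued and measurable.

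For the action statement, I take expectations under $q$. The constant $a(c)$ is finite and $\mathbb E_q|G^*|<\infty$, so $\bar G_{\theta^*}(c)=\mathbb E_q G^*+a(c)$ is finite. Then $\widetilde G_{\theta^*}=G^*-\mathbb E_qG^*$, and substituting into \Cref{eq:anchored_energy} gives
\[
S_{\theta^*}=-\log q-G^*+\mathbb E_qG^*=-\log p+\mathbb E_qG^*.
\]
Comparing with \Cref{eq:target_action}, this is $S^*+\beta(c)$ with $\beta(c)=\mathbb E_qG^*-\alpha(c)$, which is independent of $\Phi$. Note that $\mathbb E_qG^*=-\mathrm{KL}(q\|p)$.
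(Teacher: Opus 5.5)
Your proposal is correct and follows the paper's proof. Bayes' rule gives the posterior over the data index as the softmax of $G^*$, any minimizer must reproduce this posterior, pairwise score differences then fix $G_{\theta^*}$ up to a context constant, and reference centering yields $\beta(c)=\mathbb E_q G^*-\alpha(c)$. Your conditional-entropy-plus-expected-KL decomposition and the Fubini argument on $q\otimes q$ make rigorous two steps that the paper states in one line each.
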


\begin{proof} By Bayes' rule, \begin{equation} \label{eq:app_set_posterior} \begin{aligned} \mathbb P(I=i\mid\Phi_{0:K},c) &= \frac{ p(\Phi_i)\prod_{j\neq i}q(\Phi_j) }{ \sum_{\ell=0}^{K} p(\Phi_\ell)\prod_{j\neq\ell}q(\Phi_j) }= \frac{p(\Phi_i)/q(\Phi_i)} {\sum_{\ell=0}^{K}p(\Phi_\ell)/q(\Phi_\ell)} = \frac{\exp G^*(\Phi_i;c)} {\sum_{\ell=0}^{K}\exp G^*(\Phi_\ell;c)}. \end{aligned} \end{equation} The comparison loss is minimized when the model matches this posterior. Therefore, for any two candidates $\Phi_i$ and $\Phi_j$, $G_{\theta^*}(\Phi_i;c)-G_{\theta^*}(\Phi_j;c) = G^*(\Phi_i;c)-G^*(\Phi_j;c), $ which implies $G_{\theta^*}(\Phi;c) = G^*(\Phi;c)+a(c)$ almost everywhere. Then, the constant $a(c)$ cancels from the softmax.

Reference centering then gives \begin{equation} \label{eq:app_centered_correction} \begin{aligned} \widetilde G_{\theta^*}(\Phi;c) &= G_{\theta^*}(\Phi;c) - \mathbb E_q G_{\theta^*}(\Phi';c) = G^*(\Phi;c) - \mathbb E_q G^*(\Phi';c). \end{aligned} \end{equation} Substituting into \Cref{eq:anchored_energy}, \begin{equation} \label{eq:app_data_action} \begin{aligned} S_{\theta^*}(\Phi;c) &= -\log q(\Phi) - \widetilde G_{\theta^*}(\Phi;c) = -\log p(\Phi) + \mathbb E_q G^*(\Phi';c) = S^*(\Phi;c)+\beta(c), \end{aligned} \end{equation} where $\beta(c) = \mathbb E_q G^*(\Phi';c)-\alpha(c).$\end{proof}

Reference centering selects an additive convention for the learned correction. With finite data, correction accuracy then depends on reference coverage, the number of reference trajectories, model capacity and optimization.

\subsection{Structured correction and drift identification} \label{app:gaussian} \label{app:em_ratio}

Consider the Euler--Maruyama process \begin{equation} \label{eq:app_em_dynamics} x_{k+1} = x_k+b(x_k,t_k;c)\Delta t +\sqrt{2D\Delta t}\,\xi_k, \qquad \xi_k\overset{\mathrm{iid}}{\sim}\mathcal N(0,I). \end{equation} Let $v_k=(x_{k+1}-x_k)/\Delta t$. Its transition density is \begin{equation} \label{eq:app_em_transition_density} q_b(x_{k+1}\mid x_k,c) = (4\pi D\Delta t)^{-d/2} \exp\left[ -\frac{\Delta t}{4D} \|v_k-b(x_k,t_k;c)\|^2 \right], \end{equation} so for a fixed initial state, \begin{equation} \label{eq:em_action} -\log p_b(\Phi\mid c) = C_{\mathrm{EM}}(c) + \sum_{k=0}^{N-1} \frac{\Delta t}{4D} \|v_k-b(x_k,t_k;c)\|^2, \end{equation} where $C_{\mathrm{EM}}(c)=\frac{Nd}{2}\log(4\pi D\Delta t)$.

Suppose the data and reference processes share the diffusion, time grid and initial law and differ via $b_{\mathrm{data}}=b_{\mathrm{ref}}+u^*$. Writing $u_k=u^*(x_k,t_k;c)$, the Gaussian likelihood ratio gives \begin{equation} \label{eq:app_gaussian_ratio} G^*(\Phi;c) = \sum_{k=0}^{N-1} \left[ \frac{ u_k^\top \bigl(v_k-b_{\mathrm{ref}}(x_k,t_k;c)\bigr) }{2D} - \frac{\|u_k\|^2}{4D} \right]\Delta t . \end{equation} Recall that this is the structured correction in \Cref{eq:ratio_param}. Subtracting it from the reference action gives \begin{equation} \label{eq:app_quadratic_action} S_{\mathrm{ref}}(\Phi;c)-G^*(\Phi;c) = C_{\mathrm{EM}}(c) + \sum_{k=0}^{N-1} \frac{\Delta t}{4D} \left\| v_k-b_{\mathrm{data}}(x_k,t_k;c) \right\|^2. \end{equation} Thus the structured correction parameterizes the drift difference exactly under the shared-diffusion assumptions. This discrete likelihood ratio is the Euler--Maruyama analogue of the shared-diffusion Girsanov likelihood ratio~\citep{doi:10.1137/1105027}.

\begin{proposition}[Drift identification under shared diffusion] \label[proposition]{prop:drift_identification} Let $P_b$ and $P_{\widetilde b}$ be the unconditioned Euler--Maruyama trajectory distributions with the same initial law, step size $\Delta t>0$ and diffusion coefficient $D>0$. Then \begin{equation} \label{eq:app_path_kl} \operatorname{KL}(P_b\|P_{\widetilde b}) = \frac{\Delta t}{4D} \sum_{k=0}^{N-1} \mathbb E_{P_b} \left[ \left\| b(X_k,t_k;c)-\widetilde b(X_k,t_k;c) \right\|^2 \right]. \end{equation} Therefore, equality of the trajectory distributions implies equality of their drifts at states and times visited by the process. \end{proposition}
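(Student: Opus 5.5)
The plan is to compute the KL divergence through the chain rule for Markov path measures, using the explicit Gaussian transition density in \Cref{eq:app_em_transition_density}. Both $P_b$ and $P_{\widetilde b}$ factor as the common initial law $\mu_0(x_0)$ times a product of Euler--Maruyama transition kernels $q_b(x_{k+1}\mid x_k,c)$ and $q_{\widetilde b}(x_{k+1}\mid x_k,c)$. The initial factors cancel in the log-ratio, which leaves
\[
\log\frac{dP_b}{dP_{\widetilde b}}(\Phi)=\sum_{k=0}^{N-1}\log\frac{q_b(x_{k+1}\mid x_k,c)}{q_{\widetilde b}(x_{k+1}\mid x_k,c)}.
\]
Taking the expectation under $P_b$ and conditioning each term on $X_k$ (tower property) shows that the $k$-th term equals $\mathbb E_{P_b}\bigl[\operatorname{KL}\bigl(q_b(\cdot\mid X_k)\,\|\,q_{\widetilde b}(\cdot\mid X_k)\bigr)\bigr]$.

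Each conditional KL is the divergence between two Gaussians with the same covariance $2D\Delta t\,I$. Their means are $X_k+b\Delta t$ and $X_k+\widetilde b\Delta t$. The standard formula gives $\|(b-\widetilde b)\Delta t\|^2/(2\cdot 2D\Delta t)=\frac{\Delta t}{4D}\|b-\widetilde b\|^2$, evaluated at $(X_k,t_k)$. The same result follows by expanding the quadratic form in \Cref{eq:app_gaussian_ratio} and using $\mathbb E[v_k-b\mid X_k]=0$. Summing over $k$ yields \Cref{eq:app_path_kl}.

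For the identification claim, suppose $P_b=P_{\widetilde b}$. Then the KL is zero, so every summand, being nonnegative, must vanish. Hence $b(X_k,t_k;c)=\widetilde b(X_k,t_k;c)$ almost surely under the law of $X_k$ under $P_b$, for every $k$. This is exactly the statement that the drifts agree on the states and times visited by the process, i.e.\ almost everywhere with respect to each time-$k$ marginal.

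The main point needing care is integrability. The chain-rule decomposition requires the expectations to be finite, or at least that the nonnegative terms may be summed in $[0,\infty]$. I would handle this by noting that each conditional KL is nonnegative, so the tower-property interchange holds in $[0,\infty]$ without any moment assumption. The identity therefore holds as an equality in the extended reals, and in particular both sides are finite or infinite together. The identification step uses only the case where both sides equal zero.
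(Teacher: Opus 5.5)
Your proposal is correct and follows the paper's proof: the KL chain rule over the Markov factorization, the equal-covariance Gaussian formula giving $\frac{\Delta t}{4D}\|b-\widetilde b\|^2$ per step, and nonnegativity of the summands to conclude $b=\widetilde b$ almost surely under each time-$k$ marginal. Your integrability discussion goes beyond what the paper addresses, but the per-step log-ratio is signed, so the identity in $[0,\infty]$ rests on the general chain rule for KL (valid without finiteness assumptions) rather than on nonnegativity of the conditional KLs alone.
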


\begin{proof} The KL chain rule decomposes the path divergence into expected transition divergences. Because the corresponding Gaussian transitions have covariance $2D\Delta t\,I$, \[ \operatorname{KL} \left( q_b(\cdot\mid X_k,c) \| q_{\widetilde b}(\cdot\mid X_k,c) \right) = \frac{\Delta t}{4D} \left\| b(X_k,t_k;c)-\widetilde b(X_k,t_k;c) \right\|^2. \] Summing over $k$ gives \Cref{eq:app_path_kl}. If the path distributions are equal, the KL divergence is zero, so the drift difference vanishes almost surely on every occupied state-time pair. \end{proof}

For structured LEAP, $b=b_{\mathrm{ref}}+u^*$ and $\widetilde b=b_{\mathrm{ref}}+u_{\theta^*}$. Recovering the unconditioned path distribution then identifies $u_{\theta^*}=u^*$ on the support of the observed states and times.

\subsection{Matched conditioning and selected trajectories}
\label{app:matched_conditioning}

Trajectory datasets are often restricted to a fixed endpoint or terminal event. When the data and reference use the same condition, conditioning changes their path log-likelihood ratio only by a constant within that condition.

\begin{proposition}[Matched conditioning preserves action differences]
\label[proposition]{prop:matched_conditioning}
Let $Y=\tau(\Phi)$ be a trajectory statistic. Suppose the data and reference admit conditional densities given $Y=y$ with respect to the same base measure and both assign positive density to $Y=y$. Then, for almost every $\Phi$ with $\tau(\Phi)=y$,
\begin{equation}
\label{eq:app_matched_conditioning}
\log \frac{p_{\mathrm{data}}(\Phi\mid Y=y,c)}
     {p_{\mathrm{ref}}(\Phi\mid Y=y,c)} =
\log \frac{p_{\mathrm{data}}(\Phi\mid c)}
     {p_{\mathrm{ref}}(\Phi\mid c)}
+
\log
\frac{p_{\mathrm{ref}}^Y(y\mid c)}
     {p_{\mathrm{data}}^Y(y\mid c)}.
\end{equation}
\end{proposition}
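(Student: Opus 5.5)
The identity is a direct consequence of writing each conditional density as a ratio, $p(\Phi\mid Y=y,c)=p(\Phi\mid c)/p^Y(y\mid c)$ on the fiber $\{\tau(\Phi)=y\}$, once for the data and once for the reference. Taking logarithms of the ratio of the two, the joint densities combine into $G^*(\Phi;c)$ and the marginals of $Y$ combine into the second term of \Cref{eq:app_matched_conditioning}.

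The steps are as follows.

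\emph{Disintegration.} Because $p_{\mathrm{data}}(\cdot\mid c)$ and $p_{\mathrm{ref}}(\cdot\mid c)$ share a base measure $\mu$ on $\Omega_c$, disintegrate $\mu$ along $\tau$: write $\mu(d\Phi)=\mu_y(d\Phi)\,\nu(dy)$, where $\mu_y$ is concentrated on the fiber and $\nu$ is a measure on the range of $\tau$. The marginal density of $Y$ is then
\[
p^Y(y\mid c)=\int p(\Phi\mid c)\,\mu_y(d\Phi),
\]
and the conditional density with respect to $\mu_y$ is $p(\Phi\mid c)/p^Y(y\mid c)$ for $\nu$-almost every $y$ with $p^Y(y\mid c)>0$. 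This holds for both processes with the same $\mu_y$, which is exactly the hypothesis that the conditional densities are taken with respect to the same base measure.

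\emph{Forming the ratio.} Divide the data conditional density by the reference conditional density and take logarithms. This gives
\[
\log\frac{p_{\mathrm{data}}(\Phi\mid c)}{p_{\mathrm{ref}}(\Phi\mid c)}+\log\frac{p^Y_{\mathrm{ref}}(y\mid c)}{p^Y_{\mathrm{data}}(y\mid c)},
\]
which is the claimed expression. Both marginals are positive by hypothesis, so the second term is finite. Positivity of the joint densities on the fiber (from the common-support assumption used in \Cref{thm:app_action_recovery}) holds almost everywhere, which is why the statement is only claimed for almost every $\Phi$.

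\emph{Consequence.} Note that the second term depends only on $(y,c)$. Within a fixed condition it therefore acts as a context-dependent constant that cancels in the comparison objective, in the same way as $a(c)$ in \Cref{thm:app_action_recovery}.

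The main obstacle is measure-theoretic rather than computational. When $\{\tau=y\}$ is a null set, as with a fixed endpoint, the conditional densities must be read as densities with respect to the disintegrated measure $\mu_y$, and the ratio formula holds only up to $\nu$-null sets of $y$. I would handle this by stating the disintegration explicitly and noting that the common base measure forces the same $\mu_y$ for both processes.

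For the concrete settings in this paper, a direct check is also available. For interior knots given a fixed endpoint, the base measure is Lebesgue measure on the interior coordinates. For a terminal window, $\{\tau\in A\}$ has positive probability, so conditioning is elementary. In both cases the identity reduces to Bayes' rule, applied either to densities or to probabilities of the event.
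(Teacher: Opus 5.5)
Your proposal is correct and follows the paper's proof: on the fiber $\{\tau(\Phi)=y\}$ you write each conditional density as $p(\Phi\mid c)/p^Y(y\mid c)$ and take the log ratio, exactly as the paper does. The disintegration with a common $\mu_y$ is extra rigor that the paper leaves implicit, and it correctly explains why the identity is only an almost-everywhere statement (and, for arbitrary versions of the conditional densities, holds only for $\nu$-almost every $y$).
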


\begin{proof}
For $\tau(\Phi)=y$,
\[
p_{\mathrm{data}}(\Phi\mid Y=y,c)
=
\frac{p_{\mathrm{data}}(\Phi\mid c)}
     {p_{\mathrm{data}}^Y(y\mid c)},
\qquad
p_{\mathrm{ref}}(\Phi\mid Y=y,c)
=
\frac{p_{\mathrm{ref}}(\Phi\mid c)}
     {p_{\mathrm{ref}}^Y(y\mid c)}.
\]
Taking their log ratio gives \Cref{eq:app_matched_conditioning}.
The last term depends on $y$ and $c$. Therefore conditioning preserves action differences between trajectories with the same condition.
\end{proof}

Setting $Y=X_N$ gives exact endpoint conditioning and setting $Y=\mathbbm{1}\{X_N\in B\}$ and conditioning on $Y=1$ gives conditioning on a terminal region.

Conditioning preserves complete path likelihood ratios up to a constant, but it changes the one-step transition distribution. For the terminal event $\mathcal E=\{X_N\in B\}$, define $h_k(x;c)=P(\mathcal E\mid X_k=x,c)$. The conditioned transition kernel is
\begin{equation}
\label{eq:app_doob_kernel}
q_k^{\mathcal E}(x'\mid x,c)
=
q_k(x'\mid x,c)
\frac{h_{k+1}(x';c)}{h_k(x;c)}.
\end{equation}
For smooth positive $h$ in continuous time, the corresponding drift is
\begin{equation}
\label{eq:app_conditioned_drift}
b^{\mathcal E}(x,t)
=
b(x,t)+2D\nabla\log h(x,t).
\end{equation}
Thus local regression on selected trajectories estimates $b^{\mathcal E}$.

\begin{remark}[Drift identification under fixed endpoints]
Matched conditioning alone does not identify the unconditioned drift. Consider constant drift $b(x,t)=v$ relative to zero drift. The path log-likelihood ratio is
\begin{equation}
\label{eq:app_constant_drift_bridge}
\begin{aligned}
\log\frac{p_v(\Phi\mid x_0)}{p_0(\Phi\mid x_0)}
&=
\sum_{k=0}^{N-1}
\left[
\frac{v^\top(x_{k+1}-x_k)}{2D}
-
\frac{\Delta t\|v\|^2}{4D}
\right] =
\frac{v^\top(x_N-x_0)}{2D}
-
\frac{T\|v\|^2}{4D}.
\end{aligned}
\end{equation}
The intermediate states cancel from the ratio, which leaves only the initial and final states $(x_0,x_N)$. When both endpoints are fixed, the ratio is therefore the same for every trajectory connecting them, so it is not possible to distinguish different values of $v$ from these trajectories alone. Thus, \Cref{prop:drift_identification} establishes drift identification for unconditioned trajectory distributions. We separately evaluate drift recovery from trajectories selected by their final states in \Cref{sec:doublewell}.
\end{remark}

\subsection{Nonadditive action from a persistent latent regime} \label{app:memory_theory}

A latent physical parameter may remain fixed for an entire trajectory without appearing in the observed state. Marginalizing this parameter can introduce interactions between transitions, resulting in an action that cannot simply be expressed as a sum of individual transition costs.

Let $z\in\{-1,+1\}$ be sampled once with equal probability and held fixed for the complete trajectory. Conditional on $z$, assume the path density factorizes as $p_z(\Phi\mid c) = \prod_{k=0}^{N-1} q_{z,k}(x_{k+1}\mid x_k,c),$ with reference $p_0(\Phi\mid c)=\prod_{k=0}^{N-1}q_{0,k}(x_{k+1}\mid x_k,c)$. Any shared initial density cancels from the likelihood ratio.

Define $\ell_{z,k}=\log q_{z,k}(x_{k+1}\mid x_k,c)$ and $\ell_{0,k}=\log q_{0,k}(x_{k+1}\mid x_k,c)$ and let \begin{equation} \label{eq:app_latent_regime_coordinates} d_k = \frac{\ell_{+1,k}-\ell_{-1,k}}{2}, \qquad m_k = \frac{\ell_{+1,k}+\ell_{-1,k}}{2} - \ell_{0,k}. \end{equation}

\begin{proposition}[Marginalizing latent dynamics induces nonadditive actions] \label[proposition]{prop:trajectory_memory} Let $p_{\mathrm{data}}(\Phi\mid c) = \frac12 p_{+1}(\Phi\mid c) + \frac12 p_{-1}(\Phi\mid c)$. Then \begin{equation} \label{eq:memory_correction_proof} G^*(\Phi;c) = \sum_{k=0}^{N-1} m_k + \log\cosh\left( \sum_{k=0}^{N-1} d_k \right). \end{equation} For any $i\neq j$, \begin{equation} \label{eq:memory_interaction} \frac{\partial^2 G^*} {\partial d_i\,\partial d_j} = \operatorname{sech}^2\left( \sum_{k=0}^{N-1} d_k \right) >0. \end{equation} Therefore, on any region where $d_i$ and $d_j$ can vary independently, $G^*$ cannot be represented by a twice differentiable additive model $G(\Phi;c)=\sum_k g_k(d_k,m_k)$. \end{proposition}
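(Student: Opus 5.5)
The plan is a direct computation of the log-ratio, followed by a mixed-derivative argument.

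\textbf{Closed form.} Since the initial density is shared, it cancels. Then
\[
\frac{p_{\mathrm{data}}}{p_0}=\tfrac12 e^{\ell_{+1}-\ell_0}+\tfrac12 e^{\ell_{-1}-\ell_0},
\qquad \ell_z=\sum_k \ell_{z,k}.
\]
Write $\ell_{\pm1}-\ell_0=\sum_k m_k \pm \sum_k d_k$; this is immediate from the definitions in \Cref{eq:app_latent_regime_coordinates}, because $\ell_{\pm1,k}=\tfrac{\ell_{+1,k}+\ell_{-1,k}}2\pm d_k$. Factor out $e^{\sum_k m_k}$ and recognize $\tfrac12(e^{A}+e^{-A})=\cosh A$ with $A=\sum_k d_k$. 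Taking logs gives \Cref{eq:memory_correction_proof}.

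\textbf{Mixed derivative.} Treat $G^*$ as a function of the coordinates $(d_0,\dots,d_{N-1},m_0,\dots,m_{N-1})$. The $m$-part is linear and contributes nothing to $\partial^2/\partial d_i\partial d_j$. For the other term, $\partial_{d_i}\log\cosh A=\tanh A$ and $\partial_{d_j}\tanh A=\operatorname{sech}^2 A$. The result is strictly positive because $\cosh$ is finite everywhere. This gives \Cref{eq:memory_interaction}.

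\textbf{Non-representability.} Suppose $G^*=\sum_k g_k(d_k,m_k)$ with each $g_k$ twice differentiable, on an open region where $d_i$ and $d_j$ vary independently with the other coordinates held fixed. Differentiating once in $d_i$ gives $\partial_{d_i}g_i(d_i,m_i)$, which does not depend on $d_j$ for $j\neq i$. Hence the mixed partial is identically zero, contradicting the strictly positive value $\operatorname{sech}^2 A$.

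\textbf{Main obstacle.} The only delicate point is that the mixed-derivative argument is carried out in the $(d,m)$ coordinates rather than in the raw path $\Phi$. It therefore requires the region hypothesis stated in the proposition: the induced values $d_i,d_j$ must range over an open set independently, with the $m$'s fixed. I would make this explicit by restricting to that region and viewing both sides as functions there. No further analytic difficulty is expected.
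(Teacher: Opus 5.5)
Your proposal is correct and follows essentially the same route as the paper's proof: rewrite $\ell_{\pm1,k}-\ell_{0,k}=m_k\pm d_k$, factor the mixture to get $\sum_k m_k+\log\cosh A$, differentiate $\tanh A$ to obtain $\operatorname{sech}^2 A>0$, and contrast this with the vanishing mixed partials of any additive model. Your remark that the mixed-derivative argument takes place in the $(d,m)$ coordinates, and so needs an open region where $d_i$ and $d_j$ vary independently, makes explicit a point the paper leaves implicit.
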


\begin{proof} Using the conditional factorization, \[ G^*(\Phi;c) = \log\left[ \frac12 \exp\left(\sum_k \ell_{+1,k}-\ell_{0,k}\right) + \frac12 \exp\left(\sum_k \ell_{-1,k}-\ell_{0,k}\right) \right]. \] By \Cref{eq:app_latent_regime_coordinates}, $\ell_{+1,k}-\ell_{0,k}=m_k+d_k$ and $\ell_{-1,k}-\ell_{0,k}=m_k-d_k$. Therefore, \[ \begin{aligned} G^*(\Phi;c) &= \sum_k m_k + \log\left[ \frac12 e^{\sum_k d_k} + \frac12 e^{-\sum_k d_k} \right] = \sum_k m_k + \log\cosh\left(\sum_k d_k\right). \end{aligned} \] Let $A=\sum_k d_k$. Then $\partial G^*/\partial d_i=\tanh A$, so for $i\neq j$, $\frac{\partial^2G^*}{\partial d_i\,\partial d_j} = \operatorname{sech}^2 A >0.$

In contrast, any additive model satisfies $\partial^2(\sum_k g_k(d_k,m_k))/(\partial d_i\,\partial d_j)=0$ for $i\neq j$.\end{proof}


\subsection{Downstream stability under action reuse} \label{app:stability}

Fix a context $c$ and feasible set $\Gamma(c)$ and suppose \begin{equation} \label{eq:uniform_error} \sup_{\Phi\in\Gamma(c)} \left| S_\theta(\Phi;c)-S^*(\Phi;c)-\beta(c) \right| \leq \varepsilon . \end{equation} For any added path functional $H$, define $F_\theta=S_\theta+H$ and $F^*=S^*+H$.

\begin{theorem}[Stability under action reuse] \label{thm:reuse_stability} If $\widehat\Phi$ is $\delta$-suboptimal for $F_\theta$, then \begin{equation} \label{eq:app_map_stability} F^*(\widehat\Phi;c) - \inf_{\Phi\in\Gamma(c)}F^*(\Phi;c) \leq 2\varepsilon+\delta . \end{equation} If $\pi_\theta\propto e^{-F_\theta}$ and $\pi^*\propto e^{-F^*}$ have finite positive normalizing constants, then \begin{equation} \label{eq:app_density_stability} \left| \log \frac{\pi_\theta(\Phi\mid c)} {\pi^*(\Phi\mid c)} \right| \leq 2\varepsilon \qquad \forall \Phi\in\Gamma(c). \end{equation} \end{theorem}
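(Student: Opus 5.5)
The plan is to reduce both claims to a uniform bound between the two composite objectives. Set $\Delta(\Phi)=F_\theta(\Phi;c)-F^*(\Phi;c)-\beta(c)$. Since the same functional $H$ is added to both actions, $H$ cancels and $\Delta=S_\theta-S^*-\beta(c)$. Hence \Cref{eq:uniform_error} gives $|\Delta(\Phi)|\le\varepsilon$ on all of $\Gamma(c)$. Both parts of \Cref{thm:reuse_stability} should then follow from this sandwich, $F^*+\beta-\varepsilon\le F_\theta\le F^*+\beta+\varepsilon$. The constant $\beta(c)$ does not depend on $\Phi$, so it shifts infima and is absorbed into normalizers.

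For the optimization bound \Cref{eq:app_map_stability}, I will read $\delta$-suboptimality as $F_\theta(\widehat\Phi)\le\inf_{\Gamma(c)}F_\theta+\delta$. Fix any $\Phi\in\Gamma(c)$ and chain the inequalities:
\[
F^*(\widehat\Phi)\le F_\theta(\widehat\Phi)-\beta+\varepsilon\le F_\theta(\Phi)-\beta+\varepsilon+\delta\le F^*(\Phi)+2\varepsilon+\delta .
\]
Taking the infimum over $\Phi$ gives the claim. If $\inf F^*=-\infty$, the statement is vacuous or needs finiteness. I will note that $\inf F_\theta$ is finite exactly when $\inf F^*$ is, because the two differ by at most $\varepsilon+|\beta|$. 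So a $\delta$-suboptimal point only exists in the finite case.

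For the density bound \Cref{eq:app_density_stability}, write $\pi_\theta=e^{-F_\theta}/Z_\theta$ and $\pi^*=e^{-F^*}/Z^*$, with $Z=\int_{\Gamma(c)}e^{-F}$ against the common base measure. Pointwise,
\[
\log\frac{\pi_\theta}{\pi^*}=-(F_\theta-F^*)-\log\frac{Z_\theta}{Z^*}=-\beta-\Delta(\Phi)-\log\frac{Z_\theta}{Z^*}.
\]
Integrating the sandwich gives $e^{-\beta-\varepsilon}Z^*\le Z_\theta\le e^{-\beta+\varepsilon}Z^*$. Here finiteness and positivity of the normalizers make the logarithms legitimate. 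So $\log(Z_\theta/Z^*)=-\beta+\eta$ with $|\eta|\le\varepsilon$. The $\beta$ terms cancel, leaving $|\log(\pi_\theta/\pi^*)|=|\Delta(\Phi)+\eta|\le2\varepsilon$ for every $\Phi\in\Gamma(c)$.

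I do not expect a genuine obstacle. The argument is a direct triangle-inequality computation, and the only care needed is in two places. First, the cancellation of the context constant $\beta(c)$ in both parts must be handled explicitly. Second, the integrals must be taken over the same set $\Gamma(c)$ and the same base measure on which the uniform bound holds; this is where the finite, positive normalizer hypothesis is used.
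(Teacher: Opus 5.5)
Your proposal is correct and follows the paper's proof essentially step for step: the suboptimality bound uses the same inequality chain with $e=S_\theta-S^*-\beta$. For the density bound, your sandwich $e^{-\beta-\varepsilon}Z^*\le Z_\theta\le e^{-\beta+\varepsilon}Z^*$ is the paper's bound $|\log\mathbb E_{\pi^*}[e^{-e(\Phi)}]|\le\varepsilon$ in different notation, because $Z_\theta/Z^*=e^{-\beta}\,\mathbb E_{\pi^*}[e^{-e(\Phi)}]$.
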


\begin{proof} Write $e(\Phi)=S_\theta(\Phi;c)-S^*(\Phi;c)-\beta(c)$, so $|e(\Phi)|\leq\varepsilon$. If $\widehat\Phi$ is $\delta$-suboptimal for $F_\theta$, then for every $\Phi\in\Gamma(c)$, $F^*(\widehat\Phi;c) \leq F^*(\Phi;c) + e(\Phi)-e(\widehat\Phi) +\delta \leq F^*(\Phi;c)+2\varepsilon+\delta,$ which gives \Cref{eq:app_map_stability} after taking the inf.

For the normalized distributions, \[ \log\frac{\pi_\theta(\Phi\mid c)}{\pi^*(\Phi\mid c)} = -e(\Phi) - \log\mathbb E_{\pi^*}[e^{-e(\Phi)}]. \] Since $|e|\leq\varepsilon$, the second term has magnitude at most $\varepsilon$, giving \Cref{eq:app_density_stability}. \end{proof}

Because the same $H$ appears in both objectives, these bounds continue to hold after adding new preferences or constraints.

\section{Experimental details} \label{app:protocols}

We describe the evaluation metrics, training settings and inference procedures used across experiments in \Cref{sec:experiments}. 

\textbf{Compute resources.}
We run each training job on a single NVIDIA GPU (H200, H100, A100) with up to 8 CPU cores and 64 GB of memory. The reference study in~\Cref{app:reference_studies} runs on CPUs.

\subsection{Metrics and uncertainty} \label{app:metrics}

\textbf{Drift recovery.} When the true drift is known, we report \begin{equation} \label{eq:app_drift_mse} \operatorname{MSE}_{b} = \frac{1}{n_x} \sum_{i=1}^{n_x} \left\| \widehat b(x_i)-b^*(x_i) \right\|^2. \end{equation} We evaluate all methods on the same $n_x$ states. For the double well, we use $301$ states uniformly spanning $[-1,1]$.

\textbf{Action correction error.} Because the action is identified up to a context-dependent constant, we compare correction differences within each context. For $i,j\in I_c$, let $\Delta G_{ij}=G(\Phi_i;c)-G(\Phi_j;c)$. We define \begin{equation} \label{eq:correction_metric} \mathrm{NRMSE}_{\Delta G} = \left[ \frac{ \sum_c n_c^{-2} \sum_{i,j\in I_c} \left( \Delta G_{\theta,ij}-\Delta G^*_{ij} \right)^2 }{ \sum_c n_c^{-2} \sum_{i,j\in I_c} \left( \Delta G^*_{ij} \right)^2 } \right]^{1/2}. \end{equation} Exact correction differences give $0$ and a constant prediction gives $1$.

\textbf{Trajectory discrimination and ordering.} We report within-context AUROC for separating data and reference trajectories. The metric does not measure correction scale, so we also use $\mathrm{NRMSE}_{\Delta G}$ when we need numerical action differences.

\textbf{Path quality.} When the true drift is known, we evaluate inferred paths with \begin{equation} \label{eq:well_true_cost} Q_{\mathrm{true}}(\Phi) = \frac{\Delta t}{4} \sum_{k=0}^{N-1} \left\| \frac{x_{k+1}-x_k}{\Delta t} - b^*(x_k) \right\|^2. \end{equation} For the double well, $b^*(x)=x-x^3$. We use this cost only for evaluation; each method performs inference with its own objective.

\textbf{Uncertainty.} We use at least three independent random seeds for every experiment. For all the tables, we report the sample mean$\pm$sample standard deviation.

\subsection{Path inference} \label{app:inference}

\begin{algorithm}[ht]
\caption{Path optimization with a learned LEAP action}
\label{alg:leap_inference}
\begin{algorithmic}[1]
\REQUIRE learned action $S_\theta(\Phi;c)$,
initialization $\Phi^{(0)}\in\Gamma(c)$,
optional rewards $\{M_j\}$,
optional penalties $\{C_\ell\}$
\STATE define
\[
F(\Phi;c)
=
S_\theta(\Phi;c)
-
\sum_j \gamma_j M_j(\Phi)
+
\sum_\ell \lambda_\ell C_\ell(\Phi)
\]
\STATE initialize $\Phi\leftarrow\Phi^{(0)}$
\FOR{$r=1$ to $R$}
    \STATE update the free trajectory coordinates by one optimizer step on
    $F(\Phi;c)$
    \STATE project onto hard constraints when required
    \IF{using knot redistribution}
        \STATE redistribute knots and reevaluate $F$ before the next optimizer step
    \ENDIF
\ENDFOR
\RETURN optimized path $\Phi^*$
\end{algorithmic}
\end{algorithm}

\subsection{Training and model selection} \label{app:training}

\Cref{tab:training_design} summarizes the main settings. Additionally, for the double well, we use a potential network with three hidden layers of width $512$, GELU activations and LayerNorm. We train with AdamW, weight decay $10^{-5}$, cosine learning rate decay to $10^{-5}$ and gradient clipping at $1$. For the orbital experiments, we use validation stopping with patience $20$. \Cref{app:alanine_details} gives the molecular architecture, reference and regularization settings.

\begin{table}[t]
\centering
\small
\begin{tabular}{@{}lccc@{}}
\toprule
& Double well & Orbital & Alanine \\
\midrule
Training paths & $5{,}000$ & $1{,}016$ & $130$ / $463$ \\
Reference paths $K$ & $8$ & $4$ & $8$ \\
Training budget & $500$ epochs & $100$ epochs & $1{,}200$ updates \\
Batch size & $64$ & $8$ & $32$ \\
Learning rate & $10^{-3}$ & $3\times10^{-4}$ & $3\times10^{-4}$ \\
Training seeds & $3$ & $3$ & $3$ \\ \bottomrule 
\end{tabular} \caption{\textbf{Training settings for the principal experiments.}
Molecular counts correspond to the $500$\,K and $600$\,K training sets.
Orbital model inputs are described in
\Cref{app:orbital_comparisons}.}\label{tab:training_design}\end{table}

\section{Double well experiments} \label{app:wells}

We provide additional details on the double well experiments, including trajectory generation, baselines, path inference and the analytic cost for long horizons.

\subsection{Trajectory construction and baselines} \label{app:well_protocol}

\textbf{Data and reference.} We simulate from $x_0=-1$ to $T=3$ with $D=0.20$ and $\Delta t=0.01$ and retain trajectories with $x_T\in[0.9,1.1]$. Each seed uses $5{,}000$ selected training trajectories. The reference is Brownian motion with the same diffusion and terminal condition. We sample $x_T$ from its terminal density restricted to $[0.9,1.1]$, then sample the corresponding Gaussian bridge. The data and reference use the same terminal selection event as in \Cref{prop:matched_conditioning}.

\textbf{LEAP model.} We parameterize the drift by a scalar potential, $b_\theta(x)=-dU_\theta(x)/dx$. Training settings are given in \Cref{tab:training_design}.

\textbf{Local drift baselines.} Kramers-Moyal averages $(x_{k+1}-x_k)/\Delta t$ in state bins of width $2/300$. Cubic velocity regression fits the same local target by least squares using $\{1,x,x^2,x^3\}$. This basis contains the exact drift $x-x^3$, so its error does not come from function class misspecification. Both methods use the same selected trajectories as LEAP. By \Cref{app:matched_conditioning}, terminal selection changes their local regression target.

\begin{table}[t]
\centering
\small
\begin{tabular}{@{}ll@{}}
\toprule
Method & Definition \\
\midrule
Kramers-Moyal & Binned mean local increment \\
Cubic velocity regression & Cubic least-squares fit to local increments \\
Zero field & $b(x)=0$ \\
Geometric path & Fixed waiting and linear crossing path \\
Analytic drift & Exact drift and action \\
LEAP & Complete path comparison objective \\
\bottomrule
\end{tabular}
\caption{\textbf{Double well methods and controls.}}
\label{tab:well_baselines}
\end{table}

\subsection{Path inference under new boundary conditions} \label{app:well_path_inference}

\textbf{Inference setting.} We keep the learned potential fixed and change the endpoint and horizon. Training paths end in $[0.9,1.1]$ at $T=3$, whereas inference fixes $x_0=-1$ and $x_T=0$. We evaluate $T\in\{1,4,8,16\}$ and optimize only the free path coordinates.

\textbf{Optimization.} We use the same four fixed linear and waiting initializations for every differentiable objective and run $500$ L-BFGS iterations. Each method selects the candidate with lowest value under its own objective. We then evaluate the returned path using the ground-truth cost $Q_{\mathrm{true}}$ from \Cref{eq:well_true_cost}. For LEAP, we differentiate through the learned drift during path optimization.

\subsection{Long-horizon barrier scale} \label{app:barrier_theory}

For \[ \mathcal S_D^T(\phi) = \frac{1}{4D} \int_0^T \left|\dot\phi(t)+U'(\phi(t))\right|^2dt, \] with $\phi(0)=-1$ and $\phi(T)=0$, the barrier $U(0)-U(-1)=1/4$ sets the long-horizon cost. Using $|\dot\phi+U'|^2=|\dot\phi-U'|^2+4U'\dot\phi$, \[ \begin{aligned} D\mathcal S_D^T(\phi) &= U(\phi(T))-U(\phi(0)) + \frac14\int_0^T |\dot\phi-U'(\phi)|^2dt  &\geq U(0)-U(-1) = \frac14. \end{aligned} \] The reverse gradient path $\dot\phi=U'(\phi)$ approaches this bound as $T\to\infty$, so \[ \lim_{T\to\infty} \inf_{\substack{\phi(0)=-1\\\phi(T)=0}} D\mathcal S_D^T(\phi) = \frac14. \]

This limit explains why the optimal action approaches the barrier value for long horizons. At finite horizon $T$, we evaluate the optimized discrete path under the true action.

\begin{table*}[t]
\centering
\captionsetup{font=small}
\scriptsize
\setlength{\tabcolsep}{5pt}

\begin{tabular*}{0.82\textwidth}{
    @{\extracolsep{\fill}}lccc@{}
}
\toprule
& Drift MSE $\downarrow$
& \multicolumn{2}{c}{True path cost $\downarrow$} \\
\cmidrule(l){3-4}
Method & & $T=4$ & $T=16$ \\
\midrule

Kramers-Moyal
& $0.8132{\scriptscriptstyle \pm 0.0030}$
& --- & --- \\

Cubic velocity reg.
& $0.7752{\scriptscriptstyle \pm 0.0043}$
& $0.3189{\scriptscriptstyle \pm 0.0008}$
& $0.6126{\scriptscriptstyle \pm 0.0079}$ \\

Zero field
& $0.0759$
& $0.2637$
& $0.4454$ \\

Geometric path
& ---
& $0.2637$
& $0.2637$ \\
Analytic drift
& $0$
& $0.2532$
& $0.2500$ \\

\midrule
\textbf{LEAP}
& $\mathbf{(3.3{\scriptscriptstyle \pm 2.7})\times10^{-4}}$
& $\mathbf{0.2533}{\scriptscriptstyle \mathbf{\pm 0.0001}}$
& $\mathbf{0.2506}{\scriptscriptstyle \mathbf{\pm 0.0006}}$ \\

\bottomrule
\end{tabular*}
\caption{\textbf{Physical recovery and path inference.} Drift error and path cost under new endpoints and horizons.}
\label{tab:dw-recovery2}
\vspace{-10pt}
\end{table*}

\section{Reference choice and robustness}
\label{app:reference_studies}
We test sensitivity to the reference process on an unconditioned double well with an exact discrete path density. We use $b(x)=x-x^3$, $D=0.12$ and $X_0\sim\mathcal N(-0.8,0.15^2)$. Each of the seeds uses $256$ training, $96$ validation and $128$ test trajectories. Unless varied below, we use $24$ transitions with $\Delta t=0.04$, $T=0.96$ and a zero-drift reference with $D_{\mathrm{ref}}=0.20$.

Transition NCE compares each observed next state with reference next states sampled from the same current state. Gaussian MLE maximizes the Gaussian one-step transition likelihood and learns a scalar diffusion coefficient. Both use the same drift network as LEAP. Because this setting is unconditioned and Markov, the transition factorization is exact and transition-level estimators are well specified, so we expect the three estimators to perform similarly. The point of this ablation is to study the sensitivity to the reference; the advantages of LEAP appear under terminal selection (\Cref{sec:doublewell}) and persistent latent variables (\Cref{sec:orbital}).

\begin{table}[t]
\centering
\small
\begin{tabular}{@{}lccc@{}}
\toprule
Setting & LEAP & Transition NCE & Gaussian MLE \\
\midrule
\multicolumn{4}{@{}l}{\textit{Reference diffusion}}\\
$D_{\mathrm{ref}}=0.06$
& $0.31{\scriptscriptstyle \pm 0.13}$
& $0.24{\scriptscriptstyle \pm 0.13}$
& $0.347{\scriptscriptstyle \pm 0.014}$ \\
$D_{\mathrm{ref}}=0.12$
& $0.58{\scriptscriptstyle \pm 0.16}$
& $0.64{\scriptscriptstyle \pm 0.12}$
& $0.644{\scriptscriptstyle \pm 0.084}$ \\
$D_{\mathrm{ref}}=0.20$
& $0.66{\scriptscriptstyle \pm 0.14}$
& $0.71{\scriptscriptstyle \pm 0.11}$
& $0.729{\scriptscriptstyle \pm 0.084}$ \\
$D_{\mathrm{ref}}=0.32$
& $0.795{\scriptscriptstyle \pm 0.089}$
& $0.77{\scriptscriptstyle \pm 0.10}$
& $0.855{\scriptscriptstyle \pm 0.062}$ \\
$D_{\mathrm{ref}}=0.48$
& $0.939{\scriptscriptstyle \pm 0.040}$
& $0.837{\scriptscriptstyle \pm 0.080}$
& $0.931{\scriptscriptstyle \pm 0.034}$ \\
\midrule
\multicolumn{4}{@{}l}{\textit{Reference drift } $b_{\mathrm{ref}}=a_{\mathrm{ref}}b$}\\
$a_{\mathrm{ref}}=0.25$
& $0.42{\scriptscriptstyle \pm 0.15}$
& $0.45{\scriptscriptstyle \pm 0.19}$
& $0.486{\scriptscriptstyle \pm 0.066}$ \\
$a_{\mathrm{ref}}=0.50$
& $0.223{\scriptscriptstyle \pm 0.063}$
& $0.26{\scriptscriptstyle \pm 0.11}$
& $0.356{\scriptscriptstyle \pm 0.028}$ \\
$a_{\mathrm{ref}}=0.75$
& $0.1255{\scriptscriptstyle \pm 0.0054}$
& $0.121{\scriptscriptstyle \pm 0.029}$
& $0.325{\scriptscriptstyle \pm 0.013}$ \\
$a_{\mathrm{ref}}=1.00$
& $0.079{\scriptscriptstyle \pm 0.044}$
& $0.0384{\scriptscriptstyle \pm 0.0087}$
& $0.3170{\scriptscriptstyle \pm 0.0062}$ \\
\bottomrule
\end{tabular}
\caption{\textbf{Reference choice.}
Correction NRMSE $\downarrow$. The diffusion sweep uses
$b_{\mathrm{ref}}=0$; the drift sweep uses $D_{\mathrm{ref}}=0.20$. Contrastive
models use $K=4$.}
\label{tab:sensitivity_reference}
\end{table}

\textbf{Reference choice.} Adding physical drift to the reference decreases LEAP correction NRMSE from $0.66$ at $a_{\mathrm{ref}}=0$ to $0.223$ at $a_{\mathrm{ref}}=0.5$ and $0.079$ at $a_{\mathrm{ref}}=1$ (\Cref{tab:sensitivity_reference}).

\begin{table}[t]
\centering
\small
\begin{tabular*}{\linewidth}{@{\extracolsep{\fill}}lccc@{}}
\toprule
Setting & LEAP & Transition NCE & Gaussian MLE \\
\midrule
$\Delta t=0.02$, $N=48$
& $0.828{\scriptscriptstyle \pm 0.096}$
& $0.769{\scriptscriptstyle \pm 0.081}$
& $0.730{\scriptscriptstyle \pm 0.033}$ \\
$\Delta t=0.04$, $N=24$
& $0.66{\scriptscriptstyle \pm 0.14}$
& $0.71{\scriptscriptstyle \pm 0.11}$
& $0.729{\scriptscriptstyle \pm 0.084}$ \\
$\Delta t=0.08$, $N=12$
& $0.67{\scriptscriptstyle \pm 0.15}$
& $0.671{\scriptscriptstyle \pm 0.070}$
& $0.66{\scriptscriptstyle \pm 0.13}$ \\
\bottomrule
\end{tabular*}
\caption{\textbf{Temporal resolution.}
Correction NRMSE $\downarrow$, mean $\pm$ sample SD at fixed $T=0.96$.}
\label{tab:sensitivity_temporal}
\end{table}

\textbf{Temporal and observation robustness.} Across the three temporal grids, the three estimators reach similar correction error (\Cref{tab:sensitivity_temporal}). Increasing the hidden integration resolution from $2$ to $8$ substeps changes LEAP AUROC from $0.9171$ to $0.9216$. Observation noise decreases AUROC from $0.9156$ on clean trajectories to $0.5948$ at $\sigma_{\mathrm{obs}}=0.10$. \looseness=-1

\textbf{Data and reference budget.} Increasing the training set from $32$ to $256$ trajectories decreases LEAP NRMSE from $0.87$ to $0.66$. Increasing $K$ from $1$ to $4$ improves the most, with small changes after $K=4$. Under the default setting, selecting the path with the lowest predicted action yields regret of $0.0075$ nats for LEAP, compared to $0.0230$ nats for Gaussian MLE, measured using the true action.

\section{Orbital experiments} \label{app:orbital}

We give the exact orbital correction and controls for path structure, likelihood parameterization and comparison objective, using the setting in~\Cref{sec:orbital}. The benchmark has $31$ bodies, a $186$-dimensional state and $32$ transitions. We split initial contexts chronologically and evaluate on held-out contexts. During training, we resimulate data and reference trajectories from the $254$ training initial states at every epoch, using $4$ data trajectories per state (or $16$ and $32$ in Table 10). Therefore, the reported training trajectory counts are per epoch.

\subsection{Exact trajectory action} \label{app:orbital_theory}

For each trajectory, we draw $z\in\{-1,+1\}$ uniformly and set the solar mass to $m_\odot(1+z\eta)$. The same unobserved $z$ remains fixed for every transition; the reference uses the nominal mass $z=0$.

Let $\mu_z(x)$ be the Velocity-Verlet update under regime $z$ and let $\Sigma$ be the shared transition covariance. Conditional on the initial context, \begin{equation} \label{eq:exact_orbital_mixture} \begin{aligned} \ell_z(\Phi) &= \sum_{k=0}^{N-1} \log\mathcal N \left( x_{k+1};\mu_z(x_k),\Sigma \right), \quad  G^*(\Phi) &= \log\left[ \tfrac12 e^{\ell_{+1}(\Phi)} + \tfrac12 e^{\ell_{-1}(\Phi)} \right] - \ell_0(\Phi). \end{aligned} \end{equation} Marginalizing the shared variable combines likelihoods over entire trajectories, producing the nonadditive correction in \Cref{app:memory_theory}.

\subsection{Path structure and estimator controls} \label{app:orbital_comparisons}

We compare model architectures and estimation objectives to study the importance of capturing dependencies across time. LEAP-Local sums corrections from individual transitions, while Transition NCE uses the same transition information without the structured LEAP parameterization. LEAP-Global learns a correction from the entire trajectory, using accumulated features derived from the nominal gravitational dynamics. These features use the sensitivity $J_k$ of the nominal transition mean to the solar mass and the standardized innovations $e_k$. Defining $\gamma_k=e_k^\top J_k$ and $I=\sum_k|J_k|^2$, the features are $Z=\sum_k\gamma_k/\sqrt{I}$, $Z^2$, $\log(1+I)$ and the mean of $\gamma_k^2/|J_k|^2$. They do not contain the latent regime $z$ or the exact correction $G^*$, but assume that the perturbed parameter, the solar mass, is known.

LEAP-Hybrid combines local and global corrections. We also compare against Gaussian MLE, which predicts a Gaussian distribution for the next state given the current state and time, and History Gaussian MLE, which additionally conditions on previous states using a causal GRU. Finally, we compare binary and categorical neural ratio estimation objectives to assess the effect of the training objective independently of the model architecture. Results are reported in~\Cref{tab:nre}.

\begin{table}[t]
\centering
\small

\begin{tabular*}{\linewidth}{@{\extracolsep{\fill}}llc@{}}
\toprule
Model & Input & $\mathrm{NRMSE}_{\Delta G}\downarrow$ \\
\midrule
LEAP-Local
& One transition
& $1.0013{\scriptscriptstyle \pm 0.0015}$ \\
Transition NCE
& One transition
& $1.0008{\scriptscriptstyle \pm 0.0013}$ \\
LEAP-Global
& Complete path + accumulated features
& $\mathbf{0.228}{\scriptscriptstyle \mathbf{\pm 0.011}}$ \\
LEAP-Hybrid
& Local + complete path
& $0.232{\scriptscriptstyle \pm 0.014}$ \\
Gaussian MLE
& State + time
& $11.92{\scriptscriptstyle \pm 0.66}$ \\
History Gaussian MLE
& Causal history
& $10.162{\scriptscriptstyle \pm 0.093}$ \\
\midrule
\multicolumn{3}{@{}l}{\textit{Objective control with complete path model}}\\
Binary NRE
& Path encoder
& $1.00000{\scriptscriptstyle \pm 3 \times 10^{-5}}$ \\
Binary NRE
& Transformer + accumulated features
& $\mathbf{0.175}{\scriptscriptstyle \mathbf{\pm 0.011}}$ \\
Categorical NRE (LEAP)
& Transformer + accumulated features
& $0.228{\scriptscriptstyle \pm 0.015}$ \\
\bottomrule
\end{tabular*}

\caption{\textbf{Orbital controls.}
Correction NRMSE $\downarrow$ on the $31$-body benchmark; a constant
prediction gives $1$. Bold is the lowest mean within each control block.}
\label{tab:nre}
\end{table}

\textbf{Path structure.} Transition-level models remain at NRMSE $1$, while LEAP-Global reaches $0.228$ (\Cref{tab:nre}). The Gaussian likelihood models also perform poorly, as the exact correction in \Cref{eq:exact_orbital_mixture} involves a mixture over a latent regime shared across the trajectory. These results therefore show the importance of modeling dependencies across time.

\textbf{Objective control.} We also assess whether action recovery depends on the choice of contrastive objective. Using the same Transformer and accumulated features, binary and categorical ratio estimation reach NRMSE $0.175$ and $0.228$, respectively, showing that both objectives can recover the underlying action correction. On the other hand, binary ratio estimation with a generic path encoder stays near NRMSE $1$. This shows that accurate action estimation can be achieved with different ratio estimation objectives, while incorporating relevant physical structure substantially improves recovery.

\subsection{Comparison and data budgets}
\label{app:orbital_ablations}
\begin{table}[t]
\centering
\small

\begin{tabular}{@{}lcc@{}}
\toprule
Setting & Training & $\mathrm{NRMSE}_{\Delta G}\downarrow$ \\
\midrule
$K=4$  & validation stop & $0.2208{\scriptscriptstyle \pm 0.0056}$ \\
$K=4$  & $275$ epochs    & $0.2207{\scriptscriptstyle \pm 0.019}$ \\
$K=16$ & validation stop & $0.147{\scriptscriptstyle \pm 0.017}$ \\
$K=32$ & validation stop & $0.138{\scriptscriptstyle \pm 0.032}$ \\
\midrule
\multicolumn{3}{@{}l}{\textit{Training data}}\\
Training paths & LEAP-Global & Binary NRE \\
\midrule
$4{,}064$
& $0.149{\scriptscriptstyle \pm 0.069}$
& $0.129{\scriptscriptstyle \pm 0.017}$ \\
$8{,}128$
& $0.110{\scriptscriptstyle \pm 0.033}$
& $0.119{\scriptscriptstyle \pm 0.015}$ \\
\bottomrule
\end{tabular}

\caption{\textbf{Comparison and data budgets.}
Correction NRMSE $\downarrow$. Increasing $K$ changes the number of reference
paths per observed trajectory; the lower panel increases the number of training paths. Separate runs from \Cref{tab:nre}.}
\label{tab:orbital_budget}
\end{table}

Increasing $K$ from $4$ to $16$ decreases LEAP-Global NRMSE from $0.2208$ to $0.147$, while extending the $K=4$ model to $275$ epochs leaves it at $0.2207$ (\Cref{tab:orbital_budget}). Increasing the training set from $4{,}064$ to $8{,}128$ paths further decreases LEAP-Global NRMSE from $0.149$ to $0.110$.

\subsection{Longer paths and larger systems} \label{app:orbital_scaling}

We vary the number of bodies and transitions and evaluate each setting against its exact mixture correction. At $42$ bodies and $256$ transitions, each path contains $64{,}764$ coordinates. LEAP-Global reaches NRMSE $0.111$, while LEAP-Local remains at $1.0015$ (\Cref{tab:orbital_scaling}). The global model therefore maintains low correction error as the state dimension and trajectory length increase.

\begin{table}[t]
\centering
\small
\begin{tabular}{@{}rrcc@{}}
\toprule
Transitions & Path coords. & LEAP-Local & LEAP-Global \\
\midrule
\multicolumn{4}{@{}l}{\textit{31 bodies}}\\
32
& $6{,}138$
& $0.9994{\scriptscriptstyle \pm 1.2\times10^{-3}}$
& $0.2584{\scriptscriptstyle \pm 0.0097}$ \\
64
& $12{,}090$
& $0.9985{\scriptscriptstyle \pm 7\times10^{-4}}$
& $0.127{\scriptscriptstyle \pm 0.015}$ \\
128
& $23{,}994$
& $1.0052{\scriptscriptstyle \pm 0.0066}$
& $0.113{\scriptscriptstyle \pm 0.027}$ \\
256
& $47{,}802$
& $0.9965{\scriptscriptstyle \pm 7\times10^{-4}}$
& $0.117{\scriptscriptstyle \pm 0.055}$ \\
\midrule
\multicolumn{4}{@{}l}{\textit{42 bodies}}\\
32
& $8{,}316$
& $1.0001{\scriptscriptstyle \pm 7\times10^{-4}}$
& $0.251{\scriptscriptstyle \pm 0.016}$ \\
64
& $16{,}380$
& $1.0011{\scriptscriptstyle \pm 1.6\times10^{-3}}$
& $0.144{\scriptscriptstyle \pm 0.024}$ \\
128
& $32{,}508$
& $0.9997{\scriptscriptstyle \pm 1.4\times10^{-3}}$
& $0.163{\scriptscriptstyle \pm 0.053}$ \\
256
& $64{,}764$
& $1.0015{\scriptscriptstyle \pm 1.7\times10^{-3}}$
& $0.111{\scriptscriptstyle \pm 0.010}$ \\
\bottomrule
\end{tabular}
\caption{\textbf{Scaling with system size and path length.}
Correction NRMSE $\downarrow$. Each configuration uses its exact mixture
correction and $K=4$. Separate runs from \Cref{tab:nre}.}
\label{tab:orbital_scaling}
\end{table}

\subsection{Trajectory diffusion baseline} \label{app:diffusion}

We adapt GeoTDM~\citep{han2024geometric} to the orbital benchmark and train separate trajectory diffusion models on the data and reference paths. We use the difference between their denoising or variational objectives as an estimate of $G^*$. In \Cref{tab:diffusion}, both give chance AUROC ($0.4999$) and the variational estimate has Spearman correlation $0.0066$ with $G^*$. Thus trajectories with larger exact corrections do not systematically receive larger GeoTDM estimates.

\begin{table}[t]
\centering
\small
\begin{tabular}{@{}lccc@{}}
\toprule
Objective difference & Error $\downarrow$ & AUROC $\uparrow$ & Spearman $\uparrow$ \\
\midrule
Denoising
& $1.0000{\scriptscriptstyle \pm 2.3\times10^{-6}}$
& $0.4999{\scriptscriptstyle \pm 2\times10^{-4}}$
& --- \\
Variational
& $0.9883{\scriptscriptstyle \pm 1\times10^{-4}}$
& $0.4999{\scriptscriptstyle \pm 2\times10^{-4}}$
& $0.0066{\scriptscriptstyle \pm 0.0055}$ \\
\bottomrule
\end{tabular}
\caption{\textbf{Trajectory diffusion baseline.}
GeoTDM objective differences on the orbital benchmark with $31$ bodies. The denoising objective is evaluated using correction NRMSE, while the variational objective is evaluated using affine relative error.}
\label{tab:diffusion}
\end{table}

\section{Molecular inference and action reuse} \label{app:reuse_studies} \label{app:alanine_details}
Below we give additional details for the setting in~\Cref{sec:reuse_results}.

\subsection{Molecular data and models}

\textbf{Data and reference.} We use $130/17/22$ train/validation/test trajectories at $500$\,K and $463/73/77$ at $600$\,K, with simulation replicas assigned to disjoint splits. Vacuum Langevin dynamics use a $1$\,fs integration step and we save the backbone angles $(\phi,\psi)$ every $0.1$\,ps. Paths have at most $201$ frames.

We estimate a binned angular drift $\widehat b(x)$ and effective diffusion $D_{\mathrm{eff}}$ from the training replicas and define \[ b_{\mathrm{ref}}(x,t) = \widehat b(x) + 15\frac{\operatorname{wrap}(x_B-x)} {T-t+2\Delta t}. \] We condition reference paths on the same reactive event as the data and evaluate wrapped Gaussian transition densities by summing periodic images.

\textbf{Models and controls.} We embed angles periodically and parameterize LEAP with a drift network of three width $128$ hidden layers and a scalar residual with two width $64$ hidden layers. LEAP uses $K=8$ reference paths drawn from $32$ successful references per initial state. Binary NRE uses the same inputs and network capacity, while velocity regression fits wrapped one-step increments with the same drift network.

For conditional generation, we compare with DDPM~\citep{3495724.3496298} and flow matching~\citep{lipman2023flow} using the same molecular coordinates, observation masks and active durations. Both use a temporal convolutional network with width $96$ and seven residual layers. DDPM uses a $100$-step cosine schedule and flow matching uses $100$ Euler steps at inference.

\subsection{Completion controls} \label{app:alanine_completion}
We evaluate two conditioning settings. In the first, we observe the endpoints together with three interior frames; in the second, we observe only the endpoints and trajectory duration. For methods that optimize a learned action, we use eight shared initializations, run $400$ steps of projected Adam and select the path with the lowest action. Conditional generators produce eight candidate trajectories, which are selected using the reference action. In both settings, candidate selection does not use the held out target frames.
\begin{table}[t]
\centering
\small
\begin{tabular}{@{}lcc@{}}
\toprule
Method & Sparse completion & Endpoints only \\
\midrule
Reference action
& $0.1031{\scriptscriptstyle \pm 0.0016}$
& $0.8183{\scriptscriptstyle \pm 0.0042}$ \\
Untrained action
& $0.0947{\scriptscriptstyle \pm 0.0000}$
& $0.7953{\scriptscriptstyle \pm 0.0027}$ \\
Velocity regression
& $0.0900{\scriptscriptstyle \pm 0.0003}$
& $0.6609{\scriptscriptstyle \pm 0.0032}$ \\
Binary NRE
& $0.0897{\scriptscriptstyle \pm 0.0002}$
& $0.6682{\scriptscriptstyle \pm 0.0118}$ \\
Conditional DDPM
& $0.119{\scriptscriptstyle \pm 0.012}$
& $0.409{\scriptscriptstyle \pm 0.034}$ \\
Conditional flow matching
& $0.125{\scriptscriptstyle \pm 0.011}$
& $0.350{\scriptscriptstyle \pm 0.012}$ \\
LEAP
& $0.0931{\scriptscriptstyle \pm 0.0016}$
& $0.6225{\scriptscriptstyle \pm 0.0080}$ \\
\bottomrule
\end{tabular}
\caption{\textbf{Molecular completion at $600$\,K.}
Circular MSE $\downarrow$ on unobserved frames for the same observation
patterns and durations.}
\label{tab:alanine_completion}
\end{table}

With endpoints and three interior observations, LEAP achieves circular MSE $0.0931$, compared with $0.119$ for conditional DDPM and $0.125$ for conditional flow matching (\Cref{tab:alanine_completion}). With endpoints only, LEAP gives the lowest error among action-based methods ($0.6225$, compared with $0.6609$ for velocity regression and $0.6682$ for binary NRE), while the conditional generators achieve lower reconstruction error. The same LEAP checkpoint is used for both observation patterns.
\subsection{Temperature transfer}
\label{app:alanine_transfer}

We evaluate each learned correction at both temperatures without retraining.
At the target temperature, we use the corresponding reference process and
effective diffusion.

\begin{table}[t]
\centering
\small
\begin{tabular}{@{}ccrrrr@{}}
\toprule
Train & Test & LEAP & Binary NRE & Velocity & Untrained \\
\midrule
$500$ & $500$
& $0.9403{\scriptscriptstyle \pm 0.0099}$
& $0.9337{\scriptscriptstyle \pm 0.0030}$
& $0.9427{\scriptscriptstyle \pm 0.0033}$
& $0.8385{\scriptscriptstyle \pm 0.0050}$ \\
$600$ & $500$
& $\mathbf{0.9579}{\scriptscriptstyle \mathbf{\pm 0.0057}}$
& $0.9489{\scriptscriptstyle \pm 0.0051}$
& $0.9437{\scriptscriptstyle \pm 0.0022}$
& $0.8385{\scriptscriptstyle \pm 0.0050}$ \\
$500$ & $600$
& $\mathbf{0.9719}{\scriptscriptstyle \mathbf{\pm 0.0068}}$
& $0.9386{\scriptscriptstyle \pm 0.0026}$
& $0.9533{\scriptscriptstyle \pm 0.0025}$
& $0.8847{\scriptscriptstyle \pm 0.0028}$ \\
$600$ & $600$
& $0.9754{\scriptscriptstyle \pm 0.0012}$
& $0.9692{\scriptscriptstyle \pm 0.0007}$
& $0.9559{\scriptscriptstyle \pm 0.0029}$
& $0.8847{\scriptscriptstyle \pm 0.0028}$ \\
\bottomrule
\end{tabular}
\caption{\textbf{Temperature transfer.}
Within-context AUROC $\uparrow$. Bold marks LEAP evaluated at a temperature
different from its training temperature.}
\label{tab:alanine_transfer}
\end{table}

Training at $600$\,K gives AUROC $0.9579$ at $500$\,K, while training at
$500$\,K gives $0.9719$ at $600$\,K (\Cref{tab:alanine_transfer}).
Thus the learned correction transfers across temperatures when combined with
the target reference process.

\subsection{Preference and constraint reuse} \label{app:basin_preference}

We keep the learned $600$\,K action fixed and add a periodic preference centered at $c_M=(-60^\circ,-40^\circ)$, \[ M(\Phi) = \frac{1}{N-1} \sum_{k=1}^{N-1} \exp\left[ \frac{1}{s_M^2} \sum_{j=1}^{2} \bigl(\cos(x_{k,j}-c_{M,j})-1\bigr) \right], \qquad s_M=30^\circ, \] with inference objective $J_\lambda(\Phi)=S_\theta(\Phi)-\lambda M(\Phi)$. We evaluate $\lambda\in\{0,5,20,50\}$ on the same $77$ endpoint contexts.

\begin{table}[t]
\centering
\small
\begin{tabular*}{\linewidth}{@{\extracolsep{\fill}}lcccc@{}}
\toprule
& $\lambda=0$ & $\lambda=5$ & $\lambda=20$ & $\lambda=50$ \\
\midrule
Preference $M$ $\uparrow$
& $0.0658{\scriptscriptstyle \pm 0.0059}$
& $0.1095{\scriptscriptstyle \pm 0.0088}$
& $0.2517{\scriptscriptstyle \pm 0.0045}$
& $0.3641{\scriptscriptstyle \pm 0.0116}$ \\
Transition cost $\downarrow$
& $0.9389{\scriptscriptstyle \pm 0.0545}$
& $0.9159{\scriptscriptstyle \pm 0.0228}$
& $0.9809{\scriptscriptstyle \pm 0.0231}$
& $1.3774{\scriptscriptstyle \pm 0.0252}$ \\
\bottomrule
\end{tabular*}
\caption{\textbf{Preference reuse with a fixed LEAP action.}
Increasing $\lambda$ raises the basin preference while changing the unguided projected transition cost in nats per active step.}
\label{tab:basin_preference}
\end{table}

Increasing $\lambda$ from $0$ to $20$ raises the mean preference from $0.0658$ to $0.2517$ while the transition cost changes from $0.9389$ to $0.9809$ (\Cref{tab:basin_preference}). At $\lambda=50$, the preference increases further to $0.3641$ with a larger transition cost of $1.3774$.

\textbf{Constraints and composed requests.} We also keep the learned action fixed while imposing $|\phi_k|\leq150^\circ$ and $|\phi_{k+1}-\phi_k|\leq\pi-10^{-3}$ through projection during inference. We combine these bounds with basin preferences, route constraints and additional observed states by changing only the inference objective or feasible set.

\textbf{Molecular geometry verification.} Reconstructed trajectories preserve molecular geometry at both temperatures, with torsion errors below $0.24^\circ$, relative bond length changes below $1.5\%$, and preserved C$_\alpha$ chirality and trans peptide geometry.

\end{document}